\newcommand\DoToC{%
  \startcontents
  \printcontents{}{1}{\textbf{Table of Contents}\vskip3pt\hrule\vskip5pt}
  \vskip3pt\hrule\vskip5pt
}
\def\eqref#1{equation~\ref{#1}}
\def\1{\bm{1}}
\def\vtheta{{\bm{\theta}}}
\def\vb{{\bm{b}}}
\def\ve{{\bm{e}}}
\def\vg{{\bm{g}}}
\def\vx{{\bm{x}}}
\def\vy{{\bm{y}}}
\def\vz{{\bm{z}}}
\def\mW{{\bm{W}}}
\def\mX{{\bm{X}}}
\def\mY{{\bm{Y}}}
\DeclareMathAlphabet{\mathsfit}{\encodingdefault}{\sfdefault}{m}{sl}
\SetMathAlphabet{\mathsfit}{bold}{\encodingdefault}{\sfdefault}{bx}{n}
\definecolor{myblue}{rgb}{0.1,0.2,0.75}
\definecolor{lightblue}{HTML}{A6E5FF}
\newtheorem{theorem}{Theorem}
\newtheorem{lemma}{Lemma}
\newtheorem{definition}{Definition}
\newtheorem{corollary}{Corollary}
\newcommand\tsout{\bgroup\markoverwith{\textcolor{red}{\rule[0.5ex]{2pt}{0.4pt}}}\ULon}
\titlespacing\section{0pt}{0pt plus 0pt minus 3pt}{0pt plus 0pt minus 3pt}
\titlespacing\subsection{0pt}{0pt plus 0pt minus 3pt}{0pt plus 0pt minus 3pt}
\titlespacing\subsubsection{0pt}{0pt plus 0pt minus 3pt}{0pt plus 0pt minus 3pt}
\title{MoLEx: Mixture of Layer Experts for Finetuning with Sparse Upcycling}
\author{%
  Rachel S.Y. Teo\\
  Department of Mathematics\hspace*{0.92in}\\ 
  National University of Singapore\\
  \texttt{rachel.tsy@u.nus.edu}
  \And
  Tan M. Nguyen \\
  Department of Mathematics\\ 
  National University of Singapore\\
  \texttt{tanmn@nus.edu.sg}
}
\def \RR {{\mathbb{R}}}
\def\vtheta{{\bm{\theta}}}
\def\vTheta{{\bm{\Theta}}}
\newcommand{\TopK}{\mathrm{TopK}}
\newcommand\blfootnote[1]{%
  \begingroup
  \renewcommand\thefootnote{}\footnote{#1}%
  \addtocounter{footnote}{-1}%
  \endgroup
}
\begin{document}

\maketitle

\begin{abstract}
Large-scale pre-training of deep models, followed by fine-tuning them to adapt to downstream tasks, has become the cornerstone of natural language processing (NLP). The prevalence of vast corpses of data coupled with computational resources has led to large models with a considerable number of parameters. While the massive size of these models has led to remarkable success in many NLP tasks, a detriment is the expense required to retrain all the base model's parameters for the adaptation to each task or domain. Parameter Efficient Fine-Tuning (PEFT) provides a highly effective solution for this challenge by minimizing the number of parameters required to be trained in adjusting to the new task while maintaining the quality of the model. While existing methods have achieved impressive results, they mainly focus on adapting a subset of parameters using adapters, weight reparameterization, and prompt engineering. In this paper, we study layers as extractors of different types of linguistic information that are valuable when used in conjunction with each other. We then propose the Mixture of Layer Experts (MoLEx), a novel sparse mixture of experts (SMoE) whose experts are layers in the pre-trained model. In particular, MoLEx is applied at each layer of the pre-trained model. It performs a conditional computation of a mixture of layers during fine-tuning to provide the model with more structural knowledge about the data. By providing an avenue for information exchange between layers, MoLEx enables the model to make a more well-informed prediction for the downstream task, leading to better fine-tuning results with the same number of effective parameters. As experts can be processed in parallel, MoLEx introduces minimal additional computational overhead. We empirically corroborate
the advantages of MoLEx when combined with popular PEFT baseline methods on a variety of downstream fine-tuning tasks, including the popular GLUE benchmark for natural language understanding (NLU) as well as the natural language generation (NLG) End-to-End Challenge (E2E). The code is publicly available at \texttt{\url{https://github.com/rachtsy/molex}}. \blfootnote{Correspondence to: rachel.tsy@u.nus.edu and tanmn@nus.edu.sg}
\end{abstract}

\section{Introduction}
\label{sec:introduction}
Numerous natural language processing (NLP) applications depend on leveraging a large-scale, pre-trained language model for multiple downstream tasks~\citep{liu2020multilingual,zhu2020incorporating,stickland2020recipes,zhang-etal-2020-dialogpt,JMLR:v21:20-074,kale2020text,zhong-etal-2020-extractive,liu-lapata-2019-text}. This adaptation is typically achieved through fine-tuning, a process that involves updating all the parameters of the pre-trained model. Although fine-tuning large language models (LLMs) has driven impressive success across various NLP tasks~\citep{devlin2018bert, liu2019roberta, radford2019language, raffel2020exploring}, a drawback is the high computational cost associated with retraining all of the base model’s parameters for adaptation to each specific task or domain~\citep{brown2020language, chowdhery2023palm}. Parameter efficient fine-tuning (PEFT), such as Low-Rank Adaptation (LoRA)~\citep{hu2021lora}, offers an effective solution to this issue by reducing the number of parameters that need to be trained for task adaptation while still preserving the model's performance~\citep{zaken2021bitfit,ruckle-etal-2021-adapterdrop,xu2023parameter,pfeiffer-etal-2021-adapterfusion,lin-etal-2020-exploring,houlsby2019parameter,li2021prefix,xu2023parameter}. Since scaling up language models has proven highly successful, extending this scalability to the fine-tuning process is a desirable goal. However, achieving scalable fine-tuning with parameter efficiency remains a challenging and unresolved problem.

Recently, Sparse Mixture of Experts (SMoE) has emerged as a promising approach to the efficient scaling of language models~\citep{shazeer2017,fedus2022switch}. By dividing the network into modular components and activating only a subset of experts for each input, SMoE retains constant computational costs while enhancing model complexity. This technique has enabled the development of billion-parameter models and has achieved notable success in diverse areas such as machine translation~\citep{lepikhin2021gshard}, image classification~\citep{riquelme2021scaling}, and speech recognition~\citep{kumatani2021building}.

\subsection{Sparse Mixture of Experts}
\label{sec:moe}

An MoE replaces a component in the layer of the model, for example, a feed-forward or convolutional layer, by a set of networks termed experts. This approach largely scales up the model but increases the computational cost. An SMoE inherits the extended model capacity from MoE but preserves the computational overhead by taking advantage of conditional computation. In particular, a SMoE consists of a router and $E$ expert networks, $u_i$, $i=1,2,\dots,E$. For each input token $\vx_t \in \RR^{D}$ at layer $t$, the SMoE's router computes the affinity scores between $\vx_t$ and each expert as $g_i(\vx_t)$, $i=1,2,\dots,E$. In practice, we often choose the router $\vg(\vx_t) = [g_{1}(\vx_t), g_{2}(\vx_t), \dots, g_{E}(\vx_t)]^{\top} = \mW\vx + \vb$, where $\mW \in \RR^{E \times D}$ and $\vb \in \RR^{E}$. Then, a sparse gating function $\mathrm{TopK}$ is applied to select only $K$ experts with the greatest affinity scores. Here, we define the $\mathrm{TopK}$ function as:
\begin{align} 
\label{eq:softmax}
    \TopK(g_i) :=\begin{cases}
        ~g_i, \hspace{.32cm} \text{if } g_i \text{ is in the } K  \text{ largest elements of } g\\
        -\infty, \hspace{.15cm}\text{otherwise}.
    \end{cases}
\end{align}
The outputs from $K$ expert networks chosen by the router are then linearly combined as 
\begin{align}
\label{Eq:smoe_output}
    \vx_{t+1} = \vx_t + \sum_{i=1}^E \text{softmax}(\TopK(g_{i}(\vx_t)) u_i(\vx_t) = \vx_t + u(\vx_t),
\end{align} 
where $\text{softmax}(g_i) := \exp (g_i)/\sum_{j=1}^{E}\exp (g_j)$. We often set $K=2$, i.e., top-2 routing, as this configuration has been shown to provide the best trade-off between training efficiency and testing performance~\citep{lepikhin2021gshard,pmlr-v162-du22c,pmlr-v202-zhou23c,nielsen2025tight,nguyen2025camex}.

{\bf Sparse Upcycling.} Sparse upcycling~\citep{komatsuzaki2022sparse} is used to turn a dense pre-trained model into an SMoE model by replacing some multilayer perceptron layers (MLP) in the pre-trained model by SMoE layers. Each SMoE layer contains a fixed number of experts. Each expert is initialized as a copy of the original MLP.

\subsection{Contribution} 
In this paper, we integrate SMoE into the parameter efficient fine-tuning of large language models. Given a dense pre-trained model, we employ sparse upcycling~\citep{komatsuzaki2022sparse} to upgrade the model to an SMoE, whose experts are layers in the pre-trained models, and propose the novel Mixture of Layer Experts (MoLEx) upcycling method. MoLEx operates on every layer of the pre-trained model, implementing a conditional computation mechanism that aggregates multiple layers during the fine-tuning process. This approach enriches the model's structural understanding of the data. By facilitating inter-layer information exchange, MoLEx enhances the model's ability to make more informed predictions on downstream tasks, resulting in improved fine-tuning outcomes without increasing the effective parameter count. Furthermore, the parallel processing capability of experts in MoLEx ensures that the additional computational burden is negligible. In summary, our contribution is three-fold.
\begin{enumerate}[leftmargin=25pt]
    \item We develop the Mixture of Layer Experts (MoLEx), a new layer-wise sparse upcycling method for the parameter-efficient fine-tuning of LLMs whose experts are layers in the pre-trained model.
    
    \item We study MoLEx from an ensemble model perspective and theoretically prove that a linear MoLEx-upcycled model is more robust than the original dense model. 

    \item We conduct a layer probe analysis at each MoLEx layer to gain insights into which relevant linguistic information is captured by selected experts for various tasks. 
\end{enumerate}
We empirically demonstrate the advantages of MoLEx in accuracy, robustness, and zero-shot transfer learning ability on various large-scale fine-tuning benchmarks, including GLUE~\citep{wang-etal-2018-glue} and the E2E NLG Challenge~\citep{novikova2017e2e}.


\section{MoLEx: Mixture of Layer Experts}
\label{sec:molex}
\subsection{Backbone Architecture Setting}
\label{sec:backbone}
Our proposed method, MoLEx, is agnostic to the training objective, so it can be adapted to any type of backbone architecture. Without loss of generality and for the convenience of presenting our method, we focus on language modeling as our motivating use case. We first provide a setting for the backbone architecture. Given an input sequence $\vx \in \mathcal{X}$, where $\mathcal{X} = \RR^{N\times D_x}$, we consider the backbone architecture to be a deep model $f$ that transforms the input data point $\vx$ into its features $\vz_{T} \in \mathcal{Z}$, where $\mathcal{Z} = \RR^{N\times D_z}$, via a sequence of $T$ processing layers $(u_0, u_1, \dots, u_{T-1})$ as follows:
\begin{align}
\label{eqn:backbone}
\vz_0 &= \vx; \,\,\vz_{t+1} = \vz_{t} + u_t(\vz_{t};\vtheta_t),\,\, t=0,\dots,T-1.
\end{align}
where $\vtheta_t$ is the learnable parameters of the processing layer $t$.

{\bf Fine-tuning:} Given a backbone architecture initialized at the learned parameters from the pretraining, fine-tuning is to adapt this model to a downstream task represented by a training dataset of context-target pairs: $\mathcal{Z} = \{(\vx_i, \vy_i)\}_{i=1,\dots,N}$, where both $\vx_i$ and $\vy_i$ are sequence of tokens. During full fine-tuning, the model is initialized to pre-trained weights $\vTheta^{(0)} = \{\vtheta^{(0)}_0,\vtheta^{(0)}_1,\dots,\vtheta^{(0)}_{T-1}\}$ and updated to $\vTheta^{(0)} + \Delta \vTheta =\{\vtheta^{(0)}_0 + \Delta \vtheta_{0},\vtheta^{(0)}_1 + \Delta \vtheta_{1},\dots,\vtheta^{(0)}_{T-1} + \Delta \vtheta_{T-1}\}$
by repeatedly following the gradient to maximize the conditional language modeling objective: $\max_{\vTheta} \sum_{(\vx,\vy) \in \mathcal{Z}}\sum_{j=1}^{|\vy|} \log (P_{\vTheta}(\vy_{j}|\vx, \vy_{<j}))$.
\subsection{MoLEx Upcycling}
\label{sec:upcycle}
Given the same setting as in Section~\ref{sec:backbone}, the MoLEx transform is applied on each layer $t$ of the pre-trained model $f^{(0)}$ to turn $f^{(0)}$ into a sparsely upcycled model MoLEx$(f^{(0)})$ as follows: 
\begin{align}
\label{eqn:molex}
\vz_0 &= \vx, \,\,\,v_{t}(\vz_t) = \sum_{j=0}^{T-1} \text{softmax}(\TopK(g_{j}(\vz_t)) u_j(\vz_t;\vtheta^{(0)}_j), \,\, t=0,\dots,T-1, \\ 
\vz_{t+1} &= \vz_{t} + \alpha u_t(\vz_t;\vtheta^{(0)}_t) + (1-\alpha)v_{t}(\vz_t), \nonumber
\end{align}
where, again, the sparse gating function $\mathrm{TopK}$ selects the top-$K$ layers with highest affinity scores $g_{j}$, $j=0,\dots,T-1$, where $K$ is set to 1 in our method, and $\text{softmax}(g_i) := \exp (g_i)/\sum_{j=0}^{T-1}\exp (g_j)$ is the softmax normalization operator as defined in Section~\ref{sec:moe}. We follow the standard setting for SMoE in~\citep{shazeer2017,fedus2022switch,teo2024momentumsmoe} and choose the router $\vg(\vz_t) = [g_{0}(\vz_t), g_{1}(\vz_t), \dots, g_{T-1}(\vz_t)]^{\top} = \mW\vz_t + \vb$, where $\mW \in \RR^{T \times D_{z}}$ and $\vb \in \RR^{T}$. Finally, $\alpha$ is a learnable parameter used to combine the original layer $u_{t}$ with the chosen layer $v_{t}$ from the SMoE, $t=0,\dots,T-1$. Compared to the original pre-trained model $f^{(0)}$, the MoLEx upcycling MoLEx$(f^{(0)})$ shares the layer parameters $\vTheta = \{\vtheta_0,\vtheta_1,\dots,\vtheta_{T-1}\}$ and only introduces additional parameters $\mW$, $\vb$, and $\alpha$ as a router and weight shared between all layers. During fine-tuning, parameters in MoLEx$(f^{(0)})$ are updated to adapt to a downstream task via maximizing the conditional language modeling objective defined in Section~\ref{sec:backbone} above.

To clarify our method's implementation, we insert the relevant parameter efficient fine-tuning method into the pre-trained model to obtain each layer $u_j$. Then, we initialize a trainable gate, $g$, in the model to be shared across all layers. This gate determines the top-1 layer selected, $v_t$, to be mixed with $u_j$, $j=0,1,\dots,T-1$. We provide a diagram in Figure \ref{fig:2} for visualization of MoLEx. 
\begin{figure}[t!]
\begin{center}
\includegraphics[width=0.8\linewidth]{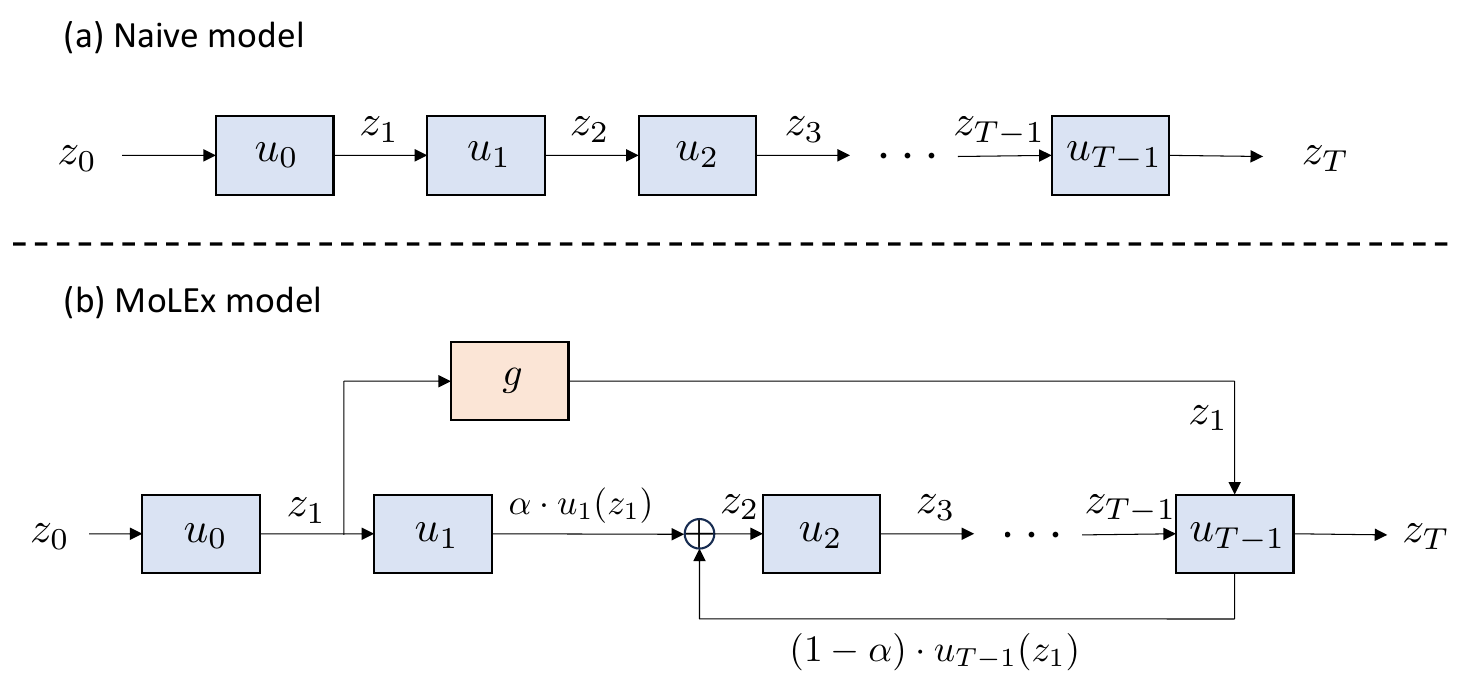}
\end{center}
\vspace{-0.15in}
\caption{\label{fig:2} \small \textcolor{black}{\textbf{(a)} A naive parameter efficient fine-tuning model with $T$ layers, $u_0$, $u_1$, $\cdots$, $u_{T-1}$ and input $z_0$. $z_t$, for $t=1,2,\cdots,T$ are the outputs of each layer. \textbf{(b)} A MoLEx model transformed from a parameter efficient fine-tuning model with $T$ layers, $u_0$, $u_1$, $\cdots$, $u_{T-1}$ and input $z_0$. $z_t$, for $t=1,2,\cdots,T$ are the outputs of each MoLEx layer. At each layer, the input to the layer is processed by a gate $g$ to select the top-1 layer expert and the outputs of the layer and the selected layer are linearly combined and weighted by $\alpha$ and $1-\alpha$ respectively. In the diagram, at layer $u_1$, layer $u_{T-1}$ is chosen by the gate for mixing. Then, the outputs of layer $u_1$ and layer $u_{T-1}$ are summed after multiplying them with $\alpha$ and $1-\alpha$ respectively. }}
\vspace{-0.2in}
\end{figure}

\color{black}
The design of the proposed MoLEx transform in Eqn.~\ref{eqn:molex} is based on the following three criteria:

{\bf (1) Preserving the useful information in the pre-trained model:} In order to preserve the information in the pre-trained model, MoLEx reuses the trained layers in the pre-trained model to form a mixture of experts at each layer. Furthermore, at each layer $t$, we fix one expert to be the original layer $u_{t}$ and use the router $\vg$ to select another expert, i.e., we use $\mathrm{Top1}$ gating function. 

Let us examine an example of a \emph{2-layer linear backbone model} to illustrate how MoLEx preserves information in the pre-trained model. The backbone model $f^{(0)}$ in this case has the following form:
\begin{align}
\vz_0 = \vx; \,\,\vz_{1} = \vz_{0} + \mW_0\vz_{0};\,\,\vz_{2} = \vz_{1} + \mW_1\vz_{1} = \vz_{0} + \mW_0\vz_{0} + \mW_1(\vz_{0} + \mW_0\vz_{0}). \nonumber
\end{align}
MoLEx$(f^{(0)})$ can then be rewritten as
\begin{align}
\vz_0 &= \vx;\vz_{1} = \vz_{0} + \alpha \mW_0\vz_0 + (1-\alpha)v_{0}(\vz_0), \nonumber\\
\vz_{2} &= \vz_{1} + \alpha \mW_1\vz_1 + (1-\alpha)v_{1}(\vz_1) \nonumber\\
&= \vz_{0} + \alpha \mW_0\vz_0 + (1-\alpha)v_{0}(\vz_0) + \alpha \mW_1(\vz_{0} + \alpha \mW_0\vz_0 + (1-\alpha)v_{0}(\vz_0)) + (1-\alpha)v_{1}(\vz_1) \nonumber\\
&= \alpha \underbrace{(\vz_{0} + \mW_0\vz_0 + \mW_1(\vz_{0} + \mW_0\vz_{0}))}_{f^{(0)}} +  (1-\alpha)\underbrace{(\vz_{0} + v_{0}(\vz_0) + v_{1}(\vz_1))}_{f^{(0)}_{\text{upcycled}}} + R, \nonumber
\end{align}
where the remainer $R=(1-\alpha)\alpha \mW_1(v_{0}(\vz_0) - \mW_0\vz_0)$, $\vz_{0} = \vx$, and $v_{t}(\vz_t)$, $t=1,2$ is defined as in Eqn.~\ref{eqn:molex}. As can be seen in equation above, MoLEx$(f^{(0)})$ is comprised of the pre-trained model $f^{(0)}$ and the additional upcycled part $f^{(0)}_{\text{upcycled}}$. The former component, $f^{(0)}$,  allows MoLEx$(f^{(0)})$ to maintain the information in the original pre-trained model.

{\bf (2) Obtaining compositional representations:} At each layer, MoLEx combines the original layer $u_{t}$ and a layer $v_{t}$ as in Eqn.~\ref{eqn:molex}. Since $v_{t}$ is chosen among  layers in the pre-trained model $f^{(0)}$ by the $\mathrm{Top1}$ gating function, we can rewrite a layer $t$ of MoLEx$(f^{(0)})$ as 
\begin{align}
\label{eqn:comp}
\vz_{t+1} &= \vz_{t} + \alpha u_t(\vz_t;\vtheta^{(0)}_t) + (1-\alpha)u_{\tau}(\vz_t;\vtheta^{(0)}_\tau), 
\end{align}
where $\tau \in \{0, 1, \dots, T-1\}$. To investigate the compositional representation captured by MoLEx, we distinguish between two cases: $\tau \ge t$ and $\tau < t$. 

\uline{Case 1, $\tau \ge t$ (combining with a current/later layer):} We apply Taylor expansion and approximate the processing at layer $\tau$ as follows:
\begin{align*}
u_{\tau}(\vz_\tau;\vtheta^{(0)}_\tau) &= u_{\tau}\left(\vz_{t} + \sum_{j=t}^{\tau-1}u_j(\vz_{j};\vtheta^{(0)}_j);\vtheta^{(0)}_\tau\right) \approx u_{\tau}(\vz_{t};\vtheta^{(0)}_{\tau}) + u^{\prime}_{\tau}(\vz_{t};\vtheta^{(0)}_{\tau})\sum_{j=t}^{\tau-1}u_j(\vz_{j};\vtheta^{(0)}_j).
\end{align*}
As can be seen, at layer $\tau$, $u_{\tau}$ implicitly extracts features from $\vz_{t}$ via the term $u_{\tau}(\vz_{t};\vtheta^{(0)}_{\tau})$. Since $\tau > t$, these features are coarse-scale/high-level features of $\vz_{t}$ while the term $u_t(\vz_t;\vtheta^{(0)}_t)$ in Eqn.~\ref{eqn:comp} extracts the fine-scale/low-level features of $\vz_{t}$. Layer $t$ of MoLEx$(f^{(0)})$ combines these coarse-scale/high-level and fine-scale/low-level features to attain a multi-scale compositional representation of the input data.

\uline{Case 2, $\tau < t$ (combining with a previous layer):} In this case, we apply Taylor expansion to approximate $u_{\tau}(\vz_t;\vtheta^{(0)}_\tau)$ as
\begin{align*}
u_{\tau}(\vz_t;\vtheta^{(0)}_\tau) &= u_{\tau}\left(\vz_{\tau} + \sum_{j=\tau}^{t-1}u_j(\vz_{j};\vtheta^{(0)}_j);\vtheta^{(0)}_\tau\right) \approx u_{\tau}(\vz_{\tau};\vtheta^{(0)}_{\tau}) + u^{\prime}_{\tau}(\vz_{\tau};\vtheta^{(0)}_{\tau})\sum_{j=\tau}^{t-1}u_j(\vz_{j};\vtheta^{(0)}_j).
\end{align*}
Here, $\vz_t = \vz_{\tau} + \sum_{j=\tau}^{t-1}u_j(\vz_{j};\vtheta^{(0)}_j)$, and $u_{\tau}$ extract finer-scale/lower-level features from $\vz_t$ via the terms $u_{\tau}, u_{\tau+1}, \dots, u_{t-1}$ applied on $\vz_\tau, \vz_{\tau + 1}, \dots, \vz_{t-1}$, respectively. Layer $t$ of MoLEx$(f^{(0)})$ combines these finer-scale/lower-level features with the coarse-scale/high-level features $u_t(\vz_t;\vtheta^{(0)}_t)$ as in Eqn.~\ref{eqn:comp} to achieve a multi-scale compositional representation of the input data.

{\bf (3) Maintaining high efficiency:} Even though MoLEx introduces an additional layer expert at each layer, the increase in the total number of parameters of the sparsely upcycled model due to the router $\vg$ is negligible since MoLEx reuses layers from the pre-trained models (see Table~\ref{tab:table6}). Moreover, at each layer, experts in MoLEx can be processed in parallel across different GPUs. Thus, the runtime of the MoLEx sparse upcycled model is comparable to the original model (see Table~\ref{tab:table6}). Finally, from our experiments, we observe that setting $K > 1$ for the $\mathrm{TopK}$ gating function in Eqn.~\ref{eqn:molex} does not yield a significant improvement in the model's performance (see Table~\ref{tab:top2}). Thus, we set $K=1$ in our design of MoLEx.

Despite its simple formulation, MoLEx offers an efficient and effective approach to sparse upcycling the models. Next, we will discuss the robustness property of MoLEx as an ensemble model.

\subsection{MoLEx as an Ensemble Model}

In this section, we consider the simple case when $u_j$ is a linear layer to provide insights into the advantages of MoLEx. We start with deriving an ensemble perspective of MoLEx from the linearity of each $u_j$ by unrolling $\vz_{t}$ to obtain 
\begin{align*}
u_{j}(\vz_{t}) &= u_{j}(\vz_{t-1} + \alpha u_{t-1}(\vz_{t-1}) + (1-\alpha) u_{i_{t-1}}(\vz_{t-1})) \\
&= u_{j}(\vz_{t-1}) + \alpha u_{j}( u_{t-1}(\vz_{t-1})) + (1-\alpha) u_{j}(u_{i_{t-1}}(\vz_{t-1})). 
\end{align*} We denote $i_t$ to be the layer index of the layer expert chosen by the gate at each layer $t$, i.e., to clarify, $u_{i_{t-1}}=v_{t-1}$ in Eqn. \ref{eqn:molex}. Repeating this for each $j=0,\cdots,T-1$, in Eqn. \ref{eq:ensem}, we write $\vz_{t+1}$ as a linear combination of compositions of $u_j$ weighted by $c_{i_0,i_1,\cdots,i_{t}} \ge 0$, a constant that is non-zero if and only if the combination $u_{i_{t}}\circ u_{i_{t-1}}\circ\cdots\circ u_{i_0}$ was chosen by the gate. We can re-label each sequence of $i_0,i_1,\cdots,i_{t}$ to an integer $j \in \{1, 2, \cdots, 3^{t+1}-1\}$ and each composition of $u_{i_{t}}\circ u_{i_{t-1}} \circ \cdots \circ u_{i_0}$ to $f_j$ for $c_{i_0,i_1,\cdots,i_{t}} > 0$ as there are at most $3^{t+1}-1$ combinations in $t$ layers of MoLEx.\footnote{As we unroll $\vz_{t}=\vz_{t-1} + (1-\alpha) u_{t}(\vz_{t-1}) + \alpha u_{i_{t}}(\vz_{t-1})$, we split each $u_j$ into 3 more $u_{j_1}, u_{j_2}, u_{j_3}$ terms at each layer and the skip connection $\vz_t$ into a $\vz_{t-1}$ and 2 more $u_{t_1}, u_{t_2}$ terms. Hence, at each $t$, we unroll $3$ terms per term giving us $3^{t+1}$ from layer $t$ but 1 term will always unroll to the skip connection term $z_t$ until we have $z_0$. Hence, we subtract away this term to count the number of $u_j$ terms. } Then, we will have
\begin{align}
\label{eq:ensem}
\vz_{t+1} &= \vz_{t} + \alpha u_{t}(\vz_{t}) + (1-\alpha) u_{i_{t}}(\vz_{t})  \\\nonumber
&= \vz_0 + \sum_{T-1\ge i_0,i_1,\cdots,i_{t}\ge 0} c_{i_0,i_1,\cdots,i_{t}}u_{i_{t}} \circ u_{i_{t-1}}\circ \cdots \circ u_{i_0}(\vx) = \vx + \sum_{j=1}^{3^{t+1}-1} c_j f_j 
\end{align}
With such an unrolling, we are able to view a linear MoLEx model as an ensemble of linear models. Next, we will show that MoLEx, as an ensemble, is more robust than a single base model in the ensemble. We begin with a formal definition of robustness. 

\begin{definition}[$\epsilon$-Robustness]
Consider an input $\vx$ and a classifier model, $f: \RR^d \rightarrow [C]$, for a $C$-way classification task where $[C]=\{1,\cdots,C\}$. If for all $\tilde{\vx}$ within a closed ball of radius $\epsilon > 0$ with center $\vx$, i.e. $\tilde{\vx} \in B(\vx,\epsilon)=\{\vx+\delta: \|\delta\|_2 \le \epsilon\}$, $f(\tilde{\vx})=f(\vx)$, then we say $f$ is $\epsilon$-robust at $\vx$. We say that $f$ is more robust than $g$ if and only if $f$ is $\epsilon'$-robust and $g$ is $\epsilon$-robust at $x$, with $\epsilon'>\epsilon$.  
\end{definition}

\begin{definition}[Linear MoLEx as an Ensemble Model]
From Eqn. \ref{eq:ensem}, we can view a linear MoLEx model as a weighted ensemble of base functions, $f_j$, where each $f_j = u_{i_{t}}\circ u_{i_{t-1}} \circ \cdots \circ u_{i_0}$ is a composition of a certain permutation of the layers $u_t$, $t \in \{0, \cdots, T-1\}$. For simplicity, let $f_0=Id$, the identity function, $c_0=1$ and $n_t=3^{t+1}-1$, so that we can write $\vz_{t+1}=\sum_{j=0}^{n_t} c_j f_j$ as a MoLEx model with $t+1$ layers. 
\end{definition}

We consider a set of fine-tuning sample data $\mX$ drawn from some distribution $\chi$ with labels $\mY$. For the ease of understanding, we consider the output of the MoLEx model, $\vz_{t+1}$, and a single base model with sequential layers, $f_{[0:t]}=u_{t} \circ u_{t-1} \circ \cdots u_0$, to be in the probability simplex, $\Delta^C=\{(x_1,x_2,\cdots,x_C)\in \RR^C_{\ge 0} | \sum_{j=1}^C x_j = 1\}$ and refer to these as prediction models. A classifier model is then a prediction model composed with a classifier head $H(\vx) = \arg\max_{i} x_i$ where $x_i$ are the elements of the vector $\vx$. Then, our classifier model is $F(\vx)=H(f(\vx))=\arg\max_{i \in [C]} f(\vx)_i$ where $f(\vx)_i$ is the $i$-th element in the output vector $f(\vx)$. 

It is not difficult to see that for an input vector $\vx \in \mX$ with label $y$, and a perturbed $\tilde{\vx} \in B(\vx, \epsilon)$, for a classifier $F=H(f)$ to remain $\epsilon$-robust at $\vx$, we require that the prediction function satisfies
\begin{align}
    f(\tilde{\vx})_{y} \ge f(\tilde{\vx})_{y_i}, \forall y_i \ne y
\end{align} where $f(\tilde{\vx})_{y_i}$ is the ${y_i}$-th element of $f(\tilde{\vx})$. Equivalently, we state this as a lemma below. 
\color{black}
\begin{lemma}[Robustness condition for classifier model]
\label{lemma:robustcond}
    Consider a prediction function $f$, classifier head $H$, data point $(\vx,y) \in (\mX, \mY)$ and a perturbed point $\tilde{\vx} \in B(\vx, \epsilon)$. If $F(\vx) = H(f(\vx))=y$, then $F$ is $\epsilon$-Robust at $\vx$ if and only if 
\begin{align}
    \forall y_i \in [C], y_i \ne y, \min_{\tilde{\vx} \in B(\vx,\epsilon)} f(\tilde{\vx})_{y}-f(\tilde{\vx})_{y_i} \ge 0
\end{align}
\end{lemma}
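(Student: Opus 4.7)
The plan is to prove both directions of the iff by unfolding the three pieces of machinery in the statement: the definition of $\epsilon$-robustness, the form of the classifier head $H(\vx) = \arg\max_i x_i$, and the relation $F = H \circ f$. Since $F(\vx) = y$ is given, $F$ being $\epsilon$-robust at $\vx$ is equivalent to $F(\tilde{\vx}) = y$ for every $\tilde{\vx} \in B(\vx,\epsilon)$, and the proof reduces to characterizing this condition in terms of the coordinates of $f(\tilde{\vx})$.

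For the forward direction ($\Rightarrow$), I would assume $F$ is $\epsilon$-robust at $\vx$ and fix an arbitrary $y_i \in [C]$ with $y_i \ne y$. Then for every $\tilde{\vx} \in B(\vx,\epsilon)$, $F(\tilde{\vx}) = \arg\max_j f(\tilde{\vx})_j = y$, so in particular $f(\tilde{\vx})_y \ge f(\tilde{\vx})_{y_i}$, i.e.\ $f(\tilde{\vx})_y - f(\tilde{\vx})_{y_i} \ge 0$ pointwise on $B(\vx,\epsilon)$. Taking the infimum over $\tilde{\vx} \in B(\vx,\epsilon)$ preserves the non-negativity, giving the claim. (If continuity of $f$ is tacitly assumed, as is standard for the neural networks considered in the paper, then $B(\vx,\epsilon)$ is compact and the infimum is attained, so $\min$ is justified; otherwise the same argument works with $\inf$.)

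For the backward direction ($\Leftarrow$), I would assume that for every $y_i \ne y$, $\min_{\tilde{\vx} \in B(\vx,\epsilon)} f(\tilde{\vx})_y - f(\tilde{\vx})_{y_i} \ge 0$. Then for each $\tilde{\vx} \in B(\vx,\epsilon)$ and each $y_i \ne y$, $f(\tilde{\vx})_y \ge f(\tilde{\vx})_{y_i}$, and trivially $f(\tilde{\vx})_y \ge f(\tilde{\vx})_y$, so $y \in \arg\max_j f(\tilde{\vx})_j$, giving $F(\tilde{\vx}) = H(f(\tilde{\vx})) = y = F(\vx)$. Hence $F$ is $\epsilon$-robust at $\vx$.

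There is no real obstacle; the lemma is essentially a restatement of the definition of margin-based robustness. The only subtlety to flag is the handling of ties in the $\arg\max$ defining $H$: since $F(\vx) = y$ presumes a consistent tie-breaking rule (or a strict maximum), I would adopt the same convention throughout so that any $\tilde{\vx}$ achieving $f(\tilde{\vx})_y = f(\tilde{\vx})_{y_i}$ for some $y_i \ne y$ is still assigned label $y$ by $H$. With that convention fixed, both directions are immediate from the definition of $\arg\max$.
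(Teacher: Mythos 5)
Your proof is correct and matches the paper's intent: the paper treats this lemma as an immediate restatement of the definition (it gives no separate proof in the appendix, only the remark ``It is not difficult to see\dots'' preceding the statement), and your two-direction unfolding of the $\arg\max$ is exactly that argument made explicit. Your observation about tie-breaking in $H$ when $f(\tilde{\vx})_y = f(\tilde{\vx})_{y_i}$ is a genuine subtlety the paper glosses over, and fixing a convention favoring $y$ (or requiring strict inequality) is the right way to make the backward direction airtight.
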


We are now ready to state our result regarding the improved robustness of linear ensembles and we defer all proofs to the appendix in section $\ref{app:proofs}$. 
\begin{theorem}[Linear ensembles are more robust]\label{thm:1} 
    Consider a data point $(\vx,y)\in (\mX, \mY)$, $\epsilon>0$, and $M$ linear base models, $f_j(\vx)=\mW_j^\top \vx$ such that $\forall y_i$ and $\mW_j$, 
$\ $
    \begin{align*}
        &\text{1. } \frac{1}{\epsilon} (\ve_y - \ve_{y_i})^\top  f_j(\vx) \ge  \|\mW_j(\ve_y - \ve_{y_i})\|_2 \\
         &\text{2. }\mW_j(\ve_y-\ve_{y_i}) \text{ are not colinear, } 
    \end{align*}
    
    where $e_y$ is the standard basis vector with $1$ at the $y$-th position and $0$ everywhere else. An ensemble classifier model, with a classification head $H$, $F_M=H(\sum_{j=0}^{M-1} c_j f_j)$ is $\epsilon'$-robust at $\vx$ with $\epsilon' > \epsilon$. 
\end{theorem}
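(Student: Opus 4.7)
The plan is to invoke Lemma~\ref{lemma:robustcond} on the ensemble prediction function $f(\vx) = \mW^\top \vx$ with $\mW = \sum_{j=0}^{M-1} c_j \mW_j$, reducing the claim to an explicit bound on the robustness radius of a single linear prediction. For any linear map with weight $\mA$ and direction $\vd = \ve_y - \ve_{y_i}$, Cauchy--Schwarz gives
\[
\min_{\tilde{\vx} \in B(\vx, r)} \vd^\top \mA^\top \tilde{\vx} \;=\; \vd^\top \mA^\top \vx \;-\; r \,\|\mA \vd\|_2,
\]
which is attained at $\tilde{\vx} = \vx - r \mA\vd / \|\mA\vd\|_2$. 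Applied to $f$, Lemma~\ref{lemma:robustcond} therefore says $F_M$ is $\epsilon'$-robust at $\vx$ precisely when
\[
\epsilon' \;\le\; \min_{y_i \neq y} \frac{(\ve_y - \ve_{y_i})^\top f(\vx)}{\|\mW(\ve_y - \ve_{y_i})\|_2},
\]
so the task reduces to showing this right-hand side strictly exceeds $\epsilon$ for every $y_i \neq y$.

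Next I would expand both numerator and denominator linearly in the base models: $(\ve_y - \ve_{y_i})^\top f(\vx) = \sum_j c_j \,(\ve_y - \ve_{y_i})^\top f_j(\vx)$ and $\mW(\ve_y - \ve_{y_i}) = \sum_j c_j \,\mW_j(\ve_y - \ve_{y_i})$. Rewriting hypothesis~1 as $(\ve_y - \ve_{y_i})^\top f_j(\vx) \ge \epsilon\, \|\mW_j(\ve_y - \ve_{y_i})\|_2$ and summing against the non-negative weights $c_j$ produces the lower bound
\[
(\ve_y - \ve_{y_i})^\top f(\vx) \;\ge\; \epsilon \sum_{j=0}^{M-1} c_j \,\|\mW_j(\ve_y - \ve_{y_i})\|_2.
\]
Hypothesis~2 (pairwise non-colinearity of the vectors $\mW_j(\ve_y - \ve_{y_i})$), together with the fact that at least two of the $c_j$ are positive (recall $c_0 = 1$ and at least one further composition is selected by the gate), delivers the \emph{strict} triangle inequality
\[
\Big\|\sum_{j=0}^{M-1} c_j \,\mW_j(\ve_y - \ve_{y_i})\Big\|_2 \;<\; \sum_{j=0}^{M-1} c_j \,\|\mW_j(\ve_y - \ve_{y_i})\|_2.
\]
Combining the two estimates, the ratio displayed above is strictly greater than $\epsilon$ for every $y_i \neq y$, and taking $\epsilon'$ as the minimum of these ratios over the finite set of alternative classes gives the required $\epsilon' > \epsilon$.

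Finally, to legitimately invoke Lemma~\ref{lemma:robustcond} I also need $F_M(\vx) = y$, but this is immediate from the numerator bound being strictly positive (it dominates $\epsilon > 0$ times a positive norm under hypothesis~2). The main obstacle is the careful handling of the strict triangle inequality: one must verify that the non-colinearity hypothesis genuinely prevents saturation of the triangle bound, i.e., that no non-trivial positive combination of the $\mW_j(\ve_y - \ve_{y_i})$ collapses onto a single ray, and that the resulting slack transfers to a strict, \emph{uniform} improvement over $\epsilon$ after taking the minimum over $y_i$. All remaining steps (linear expansion, Cauchy--Schwarz for the worst-case perturbation, taking a minimum over finitely many classes) are routine.
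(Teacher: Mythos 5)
Your proposal is correct and follows essentially the same route as the paper's proof: reduce to Lemma~\ref{lemma:robustcond}, evaluate the worst-case perturbation via Cauchy--Schwarz, lower-bound the margin using hypothesis~1 summed against the weights $c_j$, and obtain strictness from the non-colinearity hypothesis via the strict triangle inequality. Your version is marginally more explicit in two places the paper glosses over --- verifying $F_M(\vx)=y$ before invoking the lemma, and taking the minimum of the per-class ratios over the finitely many $y_i\neq y$ to get a uniform $\epsilon'>\epsilon$ --- but the underlying argument is identical.
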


\begin{corollary}[Sufficient conditions for $\epsilon$-robustness]
\label{cor:1}
Consider a data point $(\vx,y)\in (\mX, \mY)$, if a classifier model $F=H(f)$ with prediction function, $f(\vx)=\mW^\top \vx$ satisfies 
\begin{align*} \frac{1}{\epsilon} (\ve_y - \ve_{y_i})^\top  f(\vx) \ge  \|\mW(\ve_y - \ve_{y_i})\|_2, \end{align*} then $F$ is $\epsilon$-robust at $\vx$.    
\end{corollary}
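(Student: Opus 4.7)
The plan is to invoke Lemma~\ref{lemma:robustcond} and reduce $\epsilon$-robustness to showing $\min_{\tilde{\vx}\in B(\vx,\epsilon)}(\ve_y-\ve_{y_i})^\top f(\tilde{\vx})\ge 0$ for every $y_i\ne y$. Since $f$ is linear, this minimum can be computed in closed form via Cauchy--Schwarz, and the corollary's hypothesis is exactly the inequality that makes the minimum nonnegative.

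First, I would fix an arbitrary $y_i\in[C]$ with $y_i\ne y$ and write any $\tilde{\vx}\in B(\vx,\epsilon)$ as $\tilde{\vx}=\vx+\bs{\delta}$ with $\|\bs{\delta}\|_2\le\epsilon$. Using $f(\vx)=\mW^\top\vx$ and bilinearity, I would expand
\begin{align*}
(\ve_y-\ve_{y_i})^\top f(\tilde{\vx}) \;=\; (\ve_y-\ve_{y_i})^\top f(\vx)\;+\;\bs{\delta}^\top \mW(\ve_y-\ve_{y_i}).
\end{align*}
The first term is fixed, so minimizing over $\bs{\delta}$ with $\|\bs{\delta}\|_2\le\epsilon$ amounts to minimizing the inner product $\bs{\delta}^\top \mW(\ve_y-\ve_{y_i})$.

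Next, by Cauchy--Schwarz, $\bs{\delta}^\top \mW(\ve_y-\ve_{y_i})\ge -\|\bs{\delta}\|_2\,\|\mW(\ve_y-\ve_{y_i})\|_2\ge -\epsilon\,\|\mW(\ve_y-\ve_{y_i})\|_2$, with the bound attained at $\bs{\delta}^\star=-\epsilon\,\mW(\ve_y-\ve_{y_i})/\|\mW(\ve_y-\ve_{y_i})\|_2$ (assuming the latter is nonzero; the degenerate case is trivial since the perturbation term vanishes). Hence
\begin{align*}
\min_{\tilde{\vx}\in B(\vx,\epsilon)} (\ve_y-\ve_{y_i})^\top f(\tilde{\vx}) \;=\; (\ve_y-\ve_{y_i})^\top f(\vx)\;-\;\epsilon\,\|\mW(\ve_y-\ve_{y_i})\|_2.
\end{align*}
Dividing the corollary's hypothesis through by $1/\epsilon$ shows that this minimum is $\ge 0$, so Lemma~\ref{lemma:robustcond} immediately gives $\epsilon$-robustness of $F$ at $\vx$.

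There is essentially no obstacle here: the argument is a one-line Cauchy--Schwarz after the correct bookkeeping. The only minor care needed is (i) handling the edge case $\mW(\ve_y-\ve_{y_i})=\vzero$, where the perturbation term is identically zero and the conclusion holds vacuously from $(\ve_y-\ve_{y_i})^\top f(\vx)\ge 0$, and (ii) confirming that $F(\vx)=y$ is implicitly ensured by the hypothesis (the case $y_i=\arg\max_j f(\vx)_j$ forces $(\ve_y-\ve_{y_i})^\top f(\vx)\ge 0$, which together with Lemma~\ref{lemma:robustcond}'s precondition is consistent). With these checks, the chain Lemma~\ref{lemma:robustcond} $\Rightarrow$ closed-form min via Cauchy--Schwarz $\Rightarrow$ hypothesis delivers the claim.
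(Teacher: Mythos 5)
Your argument is correct and is essentially the paper's own proof: the paper proves Corollary~\ref{cor:1} by citing the proof of Theorem~\ref{thm:1} with $M=1$, and that proof is exactly the reduction via Lemma~\ref{lemma:robustcond} followed by the Cauchy--Schwarz bound $\min_{\|\bs{\delta}\|_2\le\epsilon}\bs{\delta}^\top\mW(\ve_y-\ve_{y_i})=-\epsilon\|\mW(\ve_y-\ve_{y_i})\|_2$ that you carry out explicitly. Your added care about the degenerate case $\mW(\ve_y-\ve_{y_i})=\vzero$ and the precondition $F(\vx)=y$ is sound but does not change the route.
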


\begin{corollary}[Linear MoLEx is more robust than sequential model]
\label{cor:2}
If the base models of MoLEx $f_j = u_{i_{t}}\circ u_{i_{t-1}} \circ... \circ u_{i_0}$ satisfies assumptions 1 and 2 in Theorem \ref{thm:1} above, then $\vz_{t+1}=\sum_{j=0}^{n_t} c_j f_j$ is more robust than $f_{[0:t]}$. 
\end{corollary}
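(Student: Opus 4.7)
The plan is to combine Corollary~\ref{cor:1} with Theorem~\ref{thm:1}, using the fact that the sequential base model $f_{[0:t]}$ is itself one of the members of the MoLEx ensemble. First I would observe that when the three-way split from the footnote (skip connection, original layer $u_t$, gated expert $u_{i_t}$) is iterated in the unrolling of Eqn.~\ref{eq:ensem}, the branch that selects the \emph{original} layer at every level $0,1,\dots,t$ produces exactly the composition $u_t\circ u_{t-1}\circ\cdots\circ u_0 = f_{[0:t]}$ carrying the coefficient $\alpha^{t+1} > 0$. Hence $f_{[0:t]}$ appears as one of the $f_j$'s in the representation $\vz_{t+1}=\sum_{j=0}^{n_t} c_j f_j$.

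Given that observation, the rest is a two-line assembly. Since each $u_j$ is linear, $f_{[0:t]}$ is itself linear, and by the hypothesis of the corollary it satisfies assumption~1 of Theorem~\ref{thm:1}. Applying Corollary~\ref{cor:1} to $f_{[0:t]}$ then yields that the classifier $H(f_{[0:t]})$ is $\epsilon$-robust at $\vx$. In parallel, applying Theorem~\ref{thm:1} to the ensemble $\sum_{j=0}^{n_t} c_j f_j$, whose base models by assumption all satisfy conditions~1 and~2, produces some $\epsilon' > \epsilon$ for which the MoLEx classifier $H(\vz_{t+1})$ is $\epsilon'$-robust at $\vx$. Chaining these two statements immediately gives that MoLEx is strictly more robust than $f_{[0:t]}$, matching the definition of ``more robust'' stated just before Theorem~\ref{thm:1}.

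The main obstacle is modest and essentially bookkeeping: one must confirm that the all-original-branch path in the unrolling really contributes with a strictly positive weight and is not cancelled by any other contribution with the same functional form. A short induction on $t$, tracking how each layer's $\alpha$-weighted choice multiplies the accumulated coefficient by $\alpha$, settles this. A secondary (tacit) point is that $\alpha \in (0,1)$ so that every $c_j \ge 0$ and the strict gap $\epsilon'>\epsilon$ in Theorem~\ref{thm:1} remains non-vacuous; this is automatic since $\alpha$ is a learnable gate weight bounded in $(0,1)$. Beyond these two checks, no genuinely new analysis is required: Corollary~\ref{cor:2} is a direct specialization of the ensemble robustness result to the specific ensemble arising from the linear MoLEx unrolling.
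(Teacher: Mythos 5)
Your proposal is correct and follows essentially the same route as the paper's proof: identify $f_{[0:t]}$ as the base model arising from always selecting the original layer, apply Corollary~\ref{cor:1} to get its $\epsilon$-robustness, and invoke Theorem~\ref{thm:1} on the full ensemble to obtain $\epsilon'$-robustness with $\epsilon'>\epsilon$. The extra bookkeeping you note (the $\alpha^{t+1}>0$ coefficient and absence of cancellation among nonnegative weights) is a reasonable elaboration the paper leaves implicit, but it is not a different argument.
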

\color{black}
 Consequently, we have established the robustness of a linear MoLEx model under perturbations within a closed $\epsilon$-ball. 

\section{Experimental Results}
\label{sec:experiment}
In this section, we empirically validate the fine-tuning performance of MoLEx on the Natural Language Understanding (NLU) task, GLUE \citep{wang-etal-2018-glue}, the Natural Language Generation (NLG) benchmark, the End-to-End (E2E) dataset \citep{novikova-etal-2017-e2e}, and in a zero-shot evaluation on several GLUE tasks. Across all tasks and models, we apply MoLEx to LoRA on various models, including RoBERTa-base, RoBERTa-large \citep{liu2019roberta}, and GPT-2 (medium) \citep{radford2019language}. We use LoRA as our baseline for comparison. While MoLEx is compatible with any other fine-tuning method, we choose LoRA as it is one of the most popular light-weight adapters. Details on these tasks, models, metrics and implementations can be found in Appendix \ref{appx:add}. Our results are averaged over 5 runs with different seeds and conducted on a server with 8 A100 GPUs. 


\subsection{Natual Language Understanding}
\textbf{GLUE} covers a wide range of domains, data types and challenge levels, making it a comprehensive benchmark for the generalizational ability of a language model. Using a pre-trained RoBERTa-base model from the HuggingFace Transformers library \citep{wolf-etal-2020-transformers}, we fine-tune the models for all tasks using LoRA and MoLEx for comparison. To demonstrate the scalability of MoLEx to larger models, we also include RoBERTa-large and report our results in Table \ref{tab:1}. Across all metrics, higher numbers indicate better performance. 

In Table \ref{tab:1}, we include results from prior works of other adaptation methods for reference. Details on each method can be found in the related work discussion in Section \ref{sec:rel}. As we implement MoLEx for the LoRA adaptor, we focus on that method for comparison.  We observe that across almost all tasks, MoLEx outperforms the baseline LoRA on both RoBERTa-base and RoBERTa-large, demonstrating the effectiveness and scalability of our method. A key advantage of MoLEx is its enhancement of model performance without any changes to the existing method or any increase in effective parameter count. Instead, it introduces a structural modification to the model's architecture, enabling the model to extract more information from the data, thereby leading to improved results.
\begin{table}[t]
\caption{\small RoBERTa-base (RoB\textsubscript{base}) and RoBERTa-large (RoB\textsubscript{large}) fine-tuned on the popular GLUE benchmark with different adaptations methods. MoLEx (bold and shaded in gray) is our proposed method in combination with LoRA. Hence, we use LoRA as our baseline and only reproduce results for LoRA in the table. An * indicates numbers published in previous work. For all tasks, we report accuracy except for Matthew’s correlation for CoLA, Pearson correlation for STS-B, the overall (matched and mismatched) accuracy for MNLI. The average stated for models fine-tuned from the best MNLI checkpoint is the average of all tasks with results for MRPC, RTE and STS-B from the pre-trained RoBERTa checkpoint replaced by those from the MNLI checkpoint. Across almost all tasks, MoLEx surpasses the baseline LoRA on both both RoBERTa-base and RoBERTa-large, establishing its effectiveness and scalability. }
\vspace{-0.2in}
\setlength{\tabcolsep}{2pt}
\label{tab:1}
\begin{center}
\small{
\begin{tabular}{l|c|ccccccccc}
\toprule
\multirow{2}{*}{Model \& Method} & \# Trainable &\\
 & Parameters & MNLI & SST-2 & MRPC & CoLA & QNLI & QQP & RTE & STS-B & Avg. \\
\midrule
\midrule 
\multicolumn{11}{c}{\textit{Results published in prior works for reference}} \\
\midrule 
\midrule 
RoB\textsubscript{base} (FT)* & 125.0M & 87.6 & 94.8 & 90.2 & 63.6 & 92.8 & 91.9 & 78.7 & 91.2 & 86.4 \\
RoB\textsubscript{base} (BitFit)* & 0.1M & 84.7 & 93.7 & 92.7 & 62.0 & 91.8 & 84.0 & 81.5 & 90.8 & 85.2 \\
RoB\textsubscript{base} ($\text{Adpt}^D$)* & 0.3M & 87.1\textsubscript{$\pm$.0} & 94.2\textsubscript{$\pm$.1} & 88.5\textsubscript{$\pm$1.1} & 60.8\textsubscript{$\pm$.4} & 93.1\textsubscript{$\pm$0.1} & 90.2\textsubscript{$\pm$.0} & 71.5\textsubscript{$\pm$2.7} & 89.7\textsubscript{$\pm$.3} & 84.4 \\
RoB\textsubscript{base} ($\text{Adpt}^D$)* & 0.9M & 87.3\textsubscript{$\pm$.1} & 94.7\textsubscript{$\pm$.3} & 88.4\textsubscript{$\pm$.1} & 62.6\textsubscript{$\pm$.9} & 93.0\textsubscript{$\pm$.6} & 90.6\textsubscript{$\pm$.0} & 75.9\textsubscript{$\pm$2.2} & 90.3\textsubscript{$\pm$.1} & 85.4 \\
\midrule 
\midrule 
\multicolumn{11}{c}{\textit{Reproduced result from pre-trained RoBERTa checkpoint}} \\
\midrule 
\midrule 
RoB\textsubscript{base} (LoRA) & 0.3M & 87.5\textsubscript{$\pm$.2} & 
95.0 \textsubscript{$\pm$.1} & 
88.7 \textsubscript{$\pm$.3} & 62.8\textsubscript{$\pm$1.0} & 93.2\textsubscript{$\pm$.2} & 
90.8 \textsubscript{$\pm$.0} & 76.9\textsubscript{$\pm$1.1} & 90.8\textsubscript{$\pm$.2} & 
85.7 \\
\rowcolor[gray]{0.9} \textbf{RoB\textsubscript{base} (MoLEx)} & \textcolor{black}{0.309M} & \textbf{87.7\textsubscript{$\pm$.2}} & 
\textbf{95.4 \textsubscript{$\pm$.2}} & \textbf{89.8\textsubscript{$\pm$.2}} & 
\textbf{64.8\textsubscript{$\pm$.5}} & 93.2\textsubscript{$\pm$.2} & \textbf{91.0\textsubscript{$\pm$.0}} & 
\textbf{77.3\textsubscript{$\pm$1.3}} & \textbf{91.0\textsubscript{$\pm$.2}} & 
\textbf{86.3} \\
\midrule
\it{RoB\textsubscript{large} (LoRA)} & 0.8M & 90.7 \textsubscript{$\pm$.1} & 
96.3\textsubscript{$\pm$.2} & 
90.9\textsubscript{$\pm$.4} & 67.8\textsubscript{$\pm$1.7} & 94.8\textsubscript{$\pm$.3} & 
91.5\textsubscript{$\pm$.1} & 86.5\textsubscript{$\pm$.9} & 91.9\textsubscript{$\pm$.1} & 
88.8 \\
\rowcolor[gray]{0.9}\textbf{RoB\textsubscript{large} (MoLEx)} & 0.8M & \textbf{90.9 \textsubscript{$\pm$.1}} & 
\textbf{96.4\textsubscript{$\pm$.2}} & 
\textbf{91.4\textsubscript{$\pm$.7}} & \textbf{68.2\textsubscript{$\pm$.2}} & 94.8\textsubscript{$\pm$.0} & 
\textbf{91.6\textsubscript{$\pm$.1}} & \textbf{87.1\textsubscript{$\pm$.9}}& \textbf{92.0\textsubscript{$\pm$.2}} & 
\textbf{89.1} \\
\midrule 
\midrule 
\multicolumn{11}{c}{\textit{Reproduced result from fine-tuned MNLI checkpoint}} \\
\midrule 
\midrule 
RoB\textsubscript{base} (LoRA) & 0.3M & - & 
- & 
89.7 \textsubscript{$\pm$.6} & - & - & 
- & 86.8 \textsubscript{$\pm$.2} & 91.3 \textsubscript{$\pm$.1} & 
87.1 \\
\rowcolor[gray]{0.9}\textbf{RoB\textsubscript{base} (MoLEx)} & 0.3M & - & 
- & 
\textbf{91.1} \textsubscript{$\pm$.6} & - & - & 
- & 86.8 \textsubscript{$\pm$.2} & 91.3 \textsubscript{$\pm$.0} & 
87.6 \\
\bottomrule

\end{tabular}}
\end{center}
\vspace{-0.3in}
\end{table}

\subsection{Natual Language Generation}
To further illustrate the versatility of our method on different language tasks, we evaluate MoLEx on the standard \textbf{E2E NLG Challenge} dataset introduced by \citep{novikova2017e2e} for training end-to-end, data-driven NLG systems. We fine-tune GPT-2 medium on E2E, following the set up of \citet{li2021prefix}, and report our results in Table \ref{tab:2}. For all metrics, higher is better. 

Similar to Table \ref{tab:1}, in Table \ref{tab:2}, we also include results from previous works. This is for reference, and we describe those methods in more detail in Section \ref{sec:rel}. Compared with the baseline LoRA method,  MoLEx outperforms significantly on 3 metrics with a remarkable increase on BLEU by 0.7. We further note that the standard deviations for MoLEx is generally lower than LoRA. This aligns with our analysis of MoLEx as an ensemble model, which is expected to have lower variance \citep{ganaie2022ensemble,gupta2022ensembling}, and improves the reliability of the model in language generation. 

\begin{table}[t]
\caption{\small GPT2 medium (M) fine-tuned on the standard E2E NLG Challenge benchmark. We reproduce the LoRA baseline  and compare it to our proposed method MoLEx (bold and shaded in gray) using the usual BLEU, NIST, MET, ROUGE-L and CIDEr metrics, where higher numbers indicate better performance. An * indicates numbers published in previous work and we include them in the table for reference. MoLEx significantly outperforms the baseline LoRA on 3 metrics with lower standard deviations, verifying its advantage. }
\vspace{-0.2in}
\label{tab:2}
\begin{center}
\small{\begin{tabular}{l|c|ccccc}
\toprule
\multirow{2}{*}{Model \& Method} & \# Trainable & \multicolumn{5}{c}{E2E NLG Challenge} \\
& Parameters & BLEU & NIST & MET & ROUGE-L&CIDEr \\
\midrule
\midrule 
\multicolumn{7}{c}{\textit{Results published in prior works for reference}} \\
\midrule 
\midrule 
GPT-2 M (FT)*   & 354.92M & 68.2 & 8.62 & 46.2 & 71.0 & 2.47 \\
GPT-2 M (Adapter\textsuperscript{L})* & 0.37M   & 66.3 & 8.41 & 45.0 & 69.8 & 2.40 \\
GPT-2 M (Adapter\textsuperscript{L})* & 11.09M  & 68.9 & 8.71 & 46.1 & 71.3 & 2.47 \\
GPT-2 M (Adapter\textsuperscript{H})* & 11.09M  & 67.3\textsubscript{$\pm$.6} & 8.50\textsubscript{$\pm$.07} & 46.0\textsubscript{$\pm$.2} & 70.7\textsubscript{$\pm$.2} & 2.44\textsubscript{$\pm$.01} \\
GPT-2 M (FT\textsuperscript{Top2})* & 25.19M  & 68.1 & 8.59 & 46.0 & 70.8 & 2.41 \\
GPT-2 M (PreLayer)*    & 0.35M   & 69.7 & 8.81 & 46.1 & 71.4 & 2.49 \\
\midrule
\midrule 
\multicolumn{7}{c}{\textit{Results reproduced for comparison}} \\
\midrule 
\midrule 
GPT-2 M (LoRA)         & 0.35M   & 70.0 \textsubscript{$\pm$.5}
& 8.77\textsubscript{$\pm$.05}
&\textbf{46.8 \textsubscript{$\pm$.2}}
&71.6\textsubscript{$\pm$.3}
&2.52\textsubscript{$\pm$.01} \\
\rowcolor[gray]{0.9} \textbf{GPT-2 M (MoLEx)}         & \textcolor{black}{0.359M}   & \textbf{70.7 \textsubscript{$\pm$.4}}
& \textbf{8.87\textsubscript{$\pm$.03}}
&46.5\textsubscript{$\pm$.09}
&\textbf{71.8\textsubscript{$\pm$.1}}
&2.52\textsubscript{$\pm$.01} \\
\bottomrule
\end{tabular}}
\end{center}
\vspace{-0.15in}
\end{table}

\subsection{Zero-shot transfer learning}
We assess the ability of LoRA and MoLEx to transfer knowledge across relatively similar tasks in a zero-shot transfer learning setup on GLUE using RoBERTa-base. In Table \ref{tab:3}, we present an evaluation of MoLEx in comparison with the baseline LoRA method when fine-tuned on one task and evaluated on another without any additional training. These pairs of tasks are QQP and MRPC (both test for semantic similarity), QQP and QNLI (both involve parsing questions), and QNLI and RTE (both are inference tasks). For all tasks, as they are binary classifications, when necessary, we reverse the class labels on the classifier head to obtain the best accuracy. In doing so, we are consistent across both models. 

Table \ref{tab:3} suggests that MoLEx can generalize better to new data distributions compared to LoRA as across all evaluations, mixing layers consistently leads to significant improvements in zero-shot performance on new tasks. These results illustrate the ability of MoLEx to improve the model's transferability between different classification tasks, further validating our approach.
\begin{table}[t]
\caption{\small Zero-shot evaluation of RoBERTa-base on several GLUE tasks, QNLI, RTE, MRPC, and QQP when fine-tuned with LoRA and our MoLEx (bold and shaded in gray) on different tasks. As we only consider pairs of similar tasks for meaningful comparison, we report the relevant ones and mark the others with a dash. For all pairs of tasks considered, MoLEx outperforms the baseline LoRA by a large margin.
}
\vspace{-0.2in}
\label{tab:3}
\begin{center}
\small{\begin{tabular}{l|cccccccc}
\toprule
\multirow{3}{*}{\textbf{Fine-tune on}} & \multicolumn{8}{c}{\textbf{Evaluate on}}\\
 & \multicolumn{2}{c}{QNLI}  & \multicolumn{2}{c}{RTE} & \multicolumn{2}{c}{MRPC} & \multicolumn{2}{c}{QQP} \\
 & LoRA & \cellcolor[gray]{0.9} \textbf{MoLEx}  & LoRA & \cellcolor[gray]{0.9} \textbf{MoLEx} & LoRA & \cellcolor[gray]{0.9} \textbf{MoLEx} & LoRA & \cellcolor[gray]{0.9} \textbf{MoLEx} \\
 \midrule
 QNLI &  & & 56.7\textsubscript{$\pm$1.1} & \textbf{59.9\textsubscript{$\pm$1.3}}& - &  -& 63.2\textsubscript{$\pm$.0} & \textbf{65.7\textsubscript{$\pm$.0}}\\
RTE & 56.1\textsubscript{$\pm$.2}& \textbf{58.5}\textsubscript{$\pm$.2} &  &  & -& -& - &- \\
MRPC & - & -&  -& - &  &  & 65.7\textsubscript{$\pm$.0} & \textbf{67.9\textsubscript{$\pm$.0}} \\
QQP & 50.5\textsubscript{$\pm$.2} & \textbf{56.2}\textsubscript{$\pm$.2} &-  &  -& 67.2\textsubscript{$\pm$.4} & \textbf{69.9\textsubscript{$\pm$.7}} &  &  \\
\bottomrule
\end{tabular}}
\end{center}
\vspace{-0.3in}
\end{table}

\section{Empirical Analysis}
\label{sec:emp}
We conduct probing on MoLEx, additional experiments on robustness and efficiency, and an ablation study to provide more understandings of MoLEx.
\subsection{Probing Tasks}
Language models, such as RoBERTa \citep{liu2019roberta}, attain impressive results on a multitude of NLP tasks that range in complexity, even with fine-tuning on a small subset of parameters \citep{zaken2021bitfit,ruckle-etal-2021-adapterdrop,xu2023parameter,pfeiffer-etal-2021-adapterfusion,lin-etal-2020-exploring,houlsby2019parameter,li2021prefix}. This suggests that the pre-trained base model already captures important linguistic properties of sentences that are capitalized upon during training on different tasks. 
At this junction, MoLEx with its unique feature of layer mixing can be leveraged to shed light on how the linguistic properties captured in the pre-trained base model can be combined for different downstream finetuning tasks.


We analyze the semantic nature captured by the representations in each layer of RoBERTa using the probing tasks proposed in \citep{conneau-etal-2018-cram} and following the setup in \citep{jawahar2019does}. 
\emph{For each probe, an auxiliary classification task is set up where the representations are used as features to predict certain linguistic properties of interest. The better the performance of the classifier, the more likely that the layer's hidden embedding encodes for that particular property.} 
These results are presented in Table \ref{table:prob}. By piecing together the type of information mixed in each layer of MoLEx, we enhance our understanding of the language processing occurring in a RoBERTa model during fine-tuning and improve the interpretability of neural networks in NLP \citep{belinkov2019analysis}. We will focus on CoLA (single-sentence), STS-B (similarity and paraphrase) and RTE (inference) as representative tasks and examine the layers chosen for mixing to understand the key features that enable the model to excel on each type of task. 


The 10 probes can be grouped into 3 different categories, surface, syntactic, and semantic information tasks. Briefly, sentence length (SentLen) and word content (WC) fall under surface level information; the bigram shift (BShift), tree depth (TreeD) and top constituent (TopConst) tasks represent syntactic information; and the final 5 tasks, Tense, Subject Number (SubjNum), Object Number (ObjNum), semantic odd man out (SOMO) and coordination inversion (CoordInv) are considered semantic information.  Detailed explanations on each task and their implementations can be found in the Appendix \ref{appx:emp}. We provide our probing results for RoBERTa in Table~\ref{table:prob} and discuss these results in detail in Appendix~\ref{appx:prop}.

\begin{table}[t!]
\caption{\small Probing task performance (accuracy of a simple MLP classifier) for each layer of RoBERTa-base. Bolded numbers are the top 2 values within each task. }
\vspace{-0.2in}
\label{table:prob}
\setlength{\tabcolsep}{1pt}
\begin{center}
{\small
\begin{tabular}{lcccccccccc}
\toprule
\multirow{2}{*}{Layer}&SentLen&WC&TreeD&TopConst&BShift&Tense&SubjNum&ObjNum&SOMO&CoordInv \\
&(Surface)&(Surface)&(Syntactic)&(Syntactic)&(Syntactic)&(Semantic)&(Semantic)&(Semantic)&(Semantic)&(Semantic) \\
\midrule
0&\textbf{91.48}&\textbf{4.10}&\textbf{32.00}&48.93&50.00&82.27&77.56&73.81&49.87&57.47\\
1&\textbf{87.99}&0.61&29.75&35.10&54.32&79.74&74.05&71.83&49.87&50.00\\
2&87.03&0.33&29.06&29.32&64.99&82.06&78.51&73.49&49.88&50.00\\
3&85.78&0.16&29.30&29.26&73.29&82.29&76.14&74.69&50.07&50.00\\
4&85.32&2.40&31.06&54.12&77.95&84.37&77.33&73.67&59.21&57.69\\
5&84.15&1.97&\textbf{31.83}&57.57&81.82&85.35&80.80&78.53&62.74&60.05\\
6&82.17&2.91&31.81&\textbf{59.90}&82.41&85.61&81.22&\textbf{81.48}&63.67&61.97\\
7&79.75&0.68&28.99&48.44&82.34&84.79&80.28&80.26&\textbf{64.94}&57.88\\
8&80.49&1.09&30.73&52.24&\textbf{83.56}&\textbf{86.81}&\textbf{81.65}&\textbf{80.92}&\textbf{65.00}&65.07\\
9&77.75&1.06&29.83&49.96&83.10&86.19&81.63&79.14&64.52&\textbf{66.28}\\
10&66.65&1.15&26.97&43.68&82.59&85.25&80.91&75.95&61.78&61.92\\
11&73.69&\textbf{18.25}&30.56&\textbf{60.26}&\textbf{85.25}&\textbf{87.55}&\textbf{82.92}&79.51&63.52&\textbf{66.62}\\
\bottomrule
\end{tabular}}
\end{center}
\vspace{-0.24in}
\end{table}

\begin{figure}[t!]
\begin{center}
\includegraphics[width=0.7\linewidth]{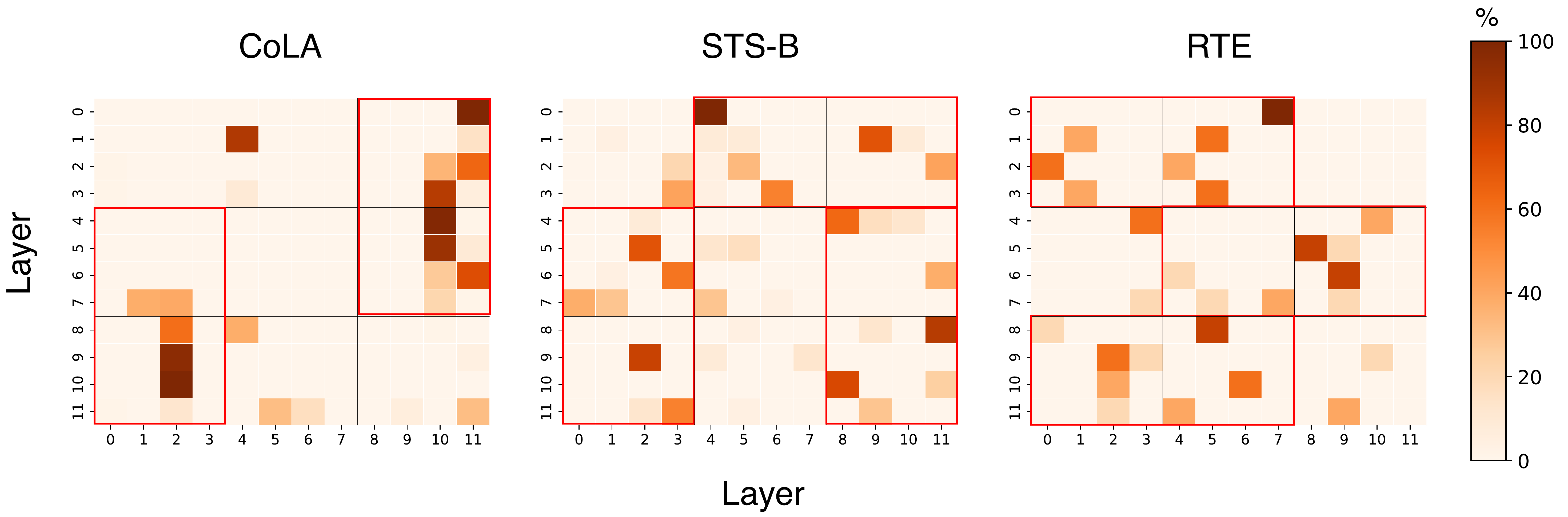}
\end{center}
\vspace{-0.15in}
\caption{\label{fig:1} \small Heat maps to visualize the percentage of time each layer expert is chosen at every layer of MoLEx when fine-tuning RoBERTa-base on GLUE tasks, CoLA, STS-B and RTE. As one expert is fixed to be the original layer, the x-axis corresponds to the sequential layer while the y-axis corresponds to the layer experts.} 
\vspace{-0.3in}
\end{figure}

{\bf Key Linguistic Features for Task Performance. }
In Figure \ref{fig:1}, from left to right, there is an increasing degree of mixing between all layers that correlates with the increasing complexity of each task. In particular, since RTE is an inference task that requires a deeper understanding of the input sentences, it is not surprising that MoLEx mixes nearly all layers to a greater extent for this task than for both CoLA and STS-B. We discuss the probing for each task below. 

\textbf{CoLA} evaluates grammatical acceptability and, as shown in Figure 2, focuses on later layers, which capture semantic information like tense and word placement—key for grammatical correctness. \textbf{STS-B} measures sentence similarity, with Figure 2 showing significant mixing in later layers for rich contextual data and earlier layers for surface features like sentence length, reflecting task-specific needs. \textbf{RTE}, a binary entailment classification task, emphasizes middle layers, aligning with the syntactic structure required for logical understanding. These observations suggest the model adapts its layer usage to the linguistic demands of each task.


\subsection{Robustness}
\begin{wraptable}[9]{r}{.4\linewidth}
\vspace{-0.44in}
\caption{Robustness (in accuracy) of RoBERTa-base on GLUE tasks, QNLI, and SST2 when fine-tuned with LoRA and MoLEx. Random noise is added to the input during evaluation. MoLEx is more robust than the LoRA baseline.}
\vspace{-0.15in}
\label{tab:robust}
\begin{center}
{\small \begin{tabular}{l|cc}
\toprule
\multirow{2}{*}{Method} &  QNLI & SST-2 \\ 
 & \multicolumn{2}{c}{\scriptsize{(with added noise)}} \\
 \midrule
LoRA & 63.1 \textsubscript{$\pm$.2} & 69.3\textsubscript{$\pm$.1} \\
MoLEx & \textbf{64.0\textsubscript{$\pm$.2}} & \textbf{70.9\textsubscript{$\pm$.2}} \\
\bottomrule
\end{tabular}}
\end{center}
\end{wraptable}
Though the models used in our experiments are non-linear, we expect that the theoretical robustness properties still hold and can be extended to practical situations. To verify this, we perform a simple experiment using MoLEx and LoRA in a RoBERTa-base model trained on 2 GLUE tasks as described in Section \ref{sec:experiment} and present the results in Table \ref{tab:robust}. For the tasks presented, we add random noise into the input data for evaluation and find that MoLEx is indeed more robust to noise than the baseline LoRA model as it achieves a higher accuracy on both tasks. We do not report the results for the other tasks, as adding noise causes both models to have an accuracy equivalent to random guessing. 


\subsection{Efficiency Analysis \& Ablation Study}
We provide the run time per sample, memory, and number of parameters of MoLEx compared to the baseline LoRA in Table \ref{tab:table6} \textcolor{black}{and a more detailed analysis in Appendix \ref{app:detailed_eff}}. There is only a marginal increase in compute time due to the additional gating function. Also, in Table \ref{tab:top2}, Appendix~\ref{app:ablation}, we conduct 3 GLUE tasks, CoLA, QQP, and SST-2 when using Top1 and Top2 routing. We observe that Top1 yields better results. Thus, we use a Top1 routing for MoLEx. 
\begin{table}[t!]
  \begin{center}
    \caption{\small Run time per sample, memory, and number of parameters for baseline LoRA and our MoLEx in RoBERTa-base during inference time. }
    \vspace{-0.15in}
    \label{tab:table6}
    \small{\begin{tabular}{lcccc}
\toprule
\multirow{2}{*}{Method} &  Sec/Sample  & Memory & \multirow{2}{*}{Parameters} \\
 & (Inference) &  (Inference)  & \\
\midrule
\textit{LoRA (baseline)} & 0.00345 &1890MB & 0.3M\\
MoLEx &  0.00357&1892MB  & 0.3M \\
\bottomrule
\end{tabular}}
  \end{center}
\vspace{-0.3in}
\end{table}


\section{Related Work}
\label{sec:rel}
 
\textbf{Parameter-Efficient Fine-Tuning (PEFT).} The simplest solution of PEFT is to only update a small subset of weights (partial fine-tuning)~\citep{li2021prefix}. For comparison, we include results from a previous work that kept all layers except the last 2 frozen on GPT-2 in Table \ref{tab:2} (FT\textsuperscript{Top2}) \citep{li2021prefix}. Other methods that fine-tune a selected subset of parameters include BiTFiT~\citep{zaken2021bitfit}, where only the bias vectors are updated, and its extension using Neural Architecture Search \citep{lawton2023neural}. A separate approach is to introduce extra trainable parameters into the model for adaptation. These include soft prompt-based tuning where trainable word embeddings are inserted among the input tokens \citep{hambardzumyan-etal-2021-warp,lester2021power,liu2023gpt,zhang2023towards} or prepended to
the hidden states of the multi-head attention layer (prefix-tuning) \citep{li2021prefix}. 
Another method is prefix-layer tuning (PreLayer) that learns new activations after every Transformer layer. \citet{qi2022parameter} suggests only training the gain and bias term of the LayerNorm in the model. In addition, adapter tuning \citep{houlsby2019parameter} involves inserting adapter layers into a transformer layer. 
This design is denoted as Adapter\textsuperscript{H} in Table \ref{tab:2}. More efficient methods have also been proposed by \citep{lin-etal-2020-exploring,pfeiffer-etal-2021-adapterfusion} to reduce the number of adapter layers (Adapter\textsuperscript{L}) and by \citep{ruckle-etal-2021-adapterdrop} to drop adapter layers (Adapter\textsuperscript{D}). Recently, neural functional networks have emerged as a promising alternative to PEFT~\citep{zhou2023permutation,zhou2023neural,mitchell2022fast,vo2024equivariant,Sinitsin2020Editable,navon2023equivariant,tran2025monomial,tran2025equivariant}.

\textbf{Neural Network Intepretability. } The study of how models learn language structure during training is gaining interest.  \citep{belinkov2019analysis}. Specifically, there is interest in deciphering the type of linguistic knowledge encoded in sentence and word embeddings \citep{dalvi-etal-2017-understanding, belinkov-etal-2017-neural, belinkov2018evaluating,sennrich-2017-grammatical}. Many studies focus on uncovering the structural properties of language captured by BERT \citep{devlin2018bert} mainly through various linguistic probes on the representations produced by the model \citep{devlin2018bert,liu-etal-2019-linguistic,tenney2019you,hewitt-manning-2019-structural,conneau-etal-2018-cram} and well-designed evaluation protocols and stimuli \citep{goldberg2019assessing,marvin2018targeted,gulordava2018colorless,linzen2016assessing}. There is also a general consensus that language models learn linguistic information hierarchically~\citep{peters-etal-2018-dissecting}. 


\section{Concluding Remarks}
\label{sec:concl}
In this paper, we introduce a Mixture of Layer Experts (MoLEx), a novel approach that leverages layers as experts to facilitate the exchange of linguistic information and improve a model's fine-tuning and transfer knowledge ability. Orthogonal to current PEFT methods, we do not add in or modify any internal components in the model. Instead, we propose a structural change to the architecture of the model that can be effortlessly integrated with any PEFT method while maintaining the same number of effective parameters. We theoretically justify the robustness of MoLEx in a simplified model and provide empirical evidence for it. Our experiments demonstrate that MoLEx significantly improves performance across a range of downstream tasks, including the GLUE benchmark and the E2E Challenge, while incurring minimal additional computational overhead and scales well with model size. Additionally, MoLEx's unique architecture also enhances model interpretability.
A limitation of our work is that our robustness guarantee is only for deep linear models.  Extending this result to the case of deep nonlinear models, as well as exploring layer mixing across different models, is an interesting direction to pursue. We leave these exciting research ideas as future work.


\newpage
\subsubsection*{Acknowledgments}
This research / project is supported by the National Research Foundation Singapore under the AI
Singapore Programme (AISG Award No: AISG2-TC-2023-012-SGIL). This research / project is
supported by the Ministry of Education, Singapore, under the Academic Research Fund Tier 1
(FY2023) (A-8002040-00-00, A-8002039-00-00). This research / project is also supported by the
NUS Presidential Young Professorship Award (A-0009807-01-00) and the NUS Artificial Intelligence Institute--Seed Funding (A-8003062-00-00)
.
\textbf{Reproducibility Statement.} Source code for our experiments are provided in the supplementary material. We provide the full details of our experimental setup -- including datasets, model specification, train regime, and evaluation protocol -- for all experiments Section~\ref{sec:experiment} and Appendix \ref{appx:add}. All datasets are publicly available.

\textbf{Ethics Statement.} Given the nature of the work, we do not foresee any negative societal and ethical
impacts of our work.

\bibliography{iclr2025_conference}
\bibliographystyle{iclr2025_conference}

\newpage
\appendix
\begin{center}
{\bf \Large{Supplement to ``MoLEx: Mixture of Layer Experts for Finetuning with Sparse Upcycling''}}
\end{center}
\label{sec:appendix}

\DoToC

\section{Proofs}
\label{app:proofs}
\subsection{Proof of Theorem \ref{thm:1}}
We restate the theorem below for convenience. 
\addtocounter{theorem}{-1}
\begin{theorem}[Linear ensembles are more robust than base models]
    For a data point $(\vx,y)\in (\mX, \mY)$, and $M$ linear base models, $f_j(\vx)=\mW_j^\top \vx$ such that $\forall y_i$ and $\mW_j$, 
    \begin{enumerate}
        \item $\frac{1}{\epsilon} (\ve_y - \ve_{y_i})^\top  f_j(\vx) \ge  \|\mW_j(\ve_y - \ve_{y_i})\|_2$
        \item $\mW_j(\ve_y-\ve_{y_i})$ are not colinear,
    \end{enumerate} an ensemble classifier model, with a classification head $H$, $F_M=H(\sum_{j=0}^{M-1} c_j f_j)$ is $\epsilon'$-robust at $\vx$ with $\epsilon' > \epsilon$.
\end{theorem}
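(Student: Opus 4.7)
The natural route is to translate the robustness requirement for the ensemble back to a scalar margin inequality via Lemma~\ref{lemma:robustcond}, then exploit linearity to pass from the assumed per-model inequality to a corresponding inequality for the weighted sum, turning the \emph{non-collinearity} hypothesis into a \emph{strict} triangle inequality that buys us room to enlarge $\epsilon$.

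\textbf{Step 1: Rewrite robustness for a linear model.} For any linear prediction function $f(\vx) = \mW^\top \vx$ and perturbation $\tilde{\vx} = \vx + \delta$ with $\|\delta\|_2 \le \epsilon$,
\begin{align*}
f(\tilde{\vx})_y - f(\tilde{\vx})_{y_i} = (\ve_y - \ve_{y_i})^\top f(\vx) + \bigl(\mW(\ve_y - \ve_{y_i})\bigr)^\top \delta.
\end{align*}
Minimising over $\|\delta\|_2 \le \epsilon$ gives $(\ve_y - \ve_{y_i})^\top f(\vx) - \epsilon\,\|\mW(\ve_y - \ve_{y_i})\|_2$. Combined with Lemma~\ref{lemma:robustcond}, this is exactly the sufficient condition in Corollary~\ref{cor:1}. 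So proving Theorem~\ref{thm:1} amounts to showing the ensemble $f_{\mathrm{ens}} := \sum_{j=0}^{M-1} c_j f_j$, with combined weight matrix $\mW_{\mathrm{ens}} := \sum_j c_j \mW_j$, satisfies this inequality with some $\epsilon' > \epsilon$.

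\textbf{Step 2: Aggregate the per-model bounds.} Multiplying assumption~1 by $c_j \ge 0$ and summing yields
\begin{align*}
\tfrac{1}{\epsilon}(\ve_y - \ve_{y_i})^\top f_{\mathrm{ens}}(\vx) \;=\; \tfrac{1}{\epsilon}\sum_j c_j (\ve_y - \ve_{y_i})^\top f_j(\vx) \;\ge\; \sum_j c_j \,\|\mW_j(\ve_y - \ve_{y_i})\|_2.
\end{align*}
Now invoke assumption~2: the vectors $\{\mW_j(\ve_y - \ve_{y_i})\}_j$ are not collinear, so the triangle inequality is \emph{strict}, giving
\begin{align*}
\sum_j c_j \,\|\mW_j(\ve_y - \ve_{y_i})\|_2 \;>\; \Bigl\|\sum_j c_j \mW_j(\ve_y - \ve_{y_i})\Bigr\|_2 \;=\; \|\mW_{\mathrm{ens}}(\ve_y - \ve_{y_i})\|_2.
\end{align*}
Chaining the two inequalities produces a \emph{strict} slack in the Corollary~\ref{cor:1} condition for the ensemble at radius $\epsilon$.

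\textbf{Step 3: Extract $\epsilon' > \epsilon$.} Define, for each $y_i \ne y$,
\begin{align*}
\epsilon_{y_i}^\star := \frac{(\ve_y - \ve_{y_i})^\top f_{\mathrm{ens}}(\vx)}{\|\mW_{\mathrm{ens}}(\ve_y - \ve_{y_i})\|_2},
\end{align*}
(with the convention $\epsilon_{y_i}^\star = +\infty$ if the denominator vanishes). The strict inequality above yields $\epsilon_{y_i}^\star > \epsilon$ for every $y_i \ne y$, so any $\epsilon'$ in the interval $(\epsilon,\,\min_{y_i\ne y}\epsilon_{y_i}^\star)$ satisfies Corollary~\ref{cor:1} for $F_M$, establishing $\epsilon'$-robustness.

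\textbf{Anticipated obstacle.} The only delicate point is justifying the strict triangle inequality from assumption~2 in full generality: the standard strictness condition asks that no two summands be non-negative scalar multiples of one another, so I would state the non-collinearity hypothesis as ``pairwise non-collinear with matching sign,'' and treat any vanishing summands separately (they contribute nothing to either side, so removing them preserves strictness as long as at least two non-zero summands remain). The finiteness of $M$ then ensures the minimum $\min_{y_i} \epsilon_{y_i}^\star$ is attained and strictly exceeds $\epsilon$, which is what the theorem claims.
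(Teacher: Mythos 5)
Your proposal is correct and follows essentially the same route as the paper's proof: reduce to the margin condition of Lemma~\ref{lemma:robustcond}, use linearity and Cauchy--Schwarz to isolate the worst-case perturbation, and convert the non-collinearity hypothesis into a strict triangle inequality whose slack yields $\epsilon' > \epsilon$. Your Step~3 (taking the minimum of the per-class radii $\epsilon_{y_i}^\star$ over the finitely many $y_i \ne y$) and your remark on when the triangle inequality is genuinely strict are slightly more careful than the paper's closing sentence, but the argument is the same.
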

\begin{proof}
    For a linear ensemble classifier, $F_M$ to be robust, from Lemma \ref{lemma:robustcond}, we require that $\forall y_i \in [C], y_i \ne y, \min_{\tilde{\vx} \in B(\vx,\epsilon)} \sum_{j=0}^{M-1} c_j (f_j(\tilde{\vx})_{y}-f_j(\tilde{\vx})_{y_i} )\ge 0$. Expanding this, with $\ve_y$ being the standard basis vector with $1$ in the $y$-th position, 
    \begin{align*}
       & \min_{\tilde{\vx} \in B(\vx,\epsilon)} \sum_{j=0}^{M-1} c_j (f_j(\tilde{\vx})_{y}-f_j(\tilde{\vx})_{y_i} )\\ &= \min_{\tilde{\vx} \in B(\vx,\epsilon)} \sum_{j=0}^{M-1} c_j (\ve_y - \ve_{y_i})^\top f_j(\tilde{\vx}) \\
       &=  \min_{\tilde{\vx} \in B(\vx,\epsilon)} \sum_{j=0}^{M-1} c_j (\ve_y - \ve_{y_i})^\top( \mW_j^\top \vx +  \mW_j^\top (\tilde{\vx}-\vx))\\
       &=   \sum_{j=0}^{M-1} c_j (\ve_y - \ve_{y_i})^\top  f_j(\vx) + \min_{\tilde{\vx} \in B(\vx,\epsilon)}(\ve_y - \ve_{y_i})^\top (\sum_{j=0}^{M-1} c_j \mW_j^\top)(\tilde{\vx}-\vx) \\
       &= \sum_{j=0}^{M-1} c_j (\ve_y - \ve_{y_i})^\top  f_j(\vx) + \min_{\tilde{\vx} \in B(\vx,\epsilon)}(\bar{\mW}(\ve_y - \ve_{y_i}))^\top (\tilde{\vx}-\vx) \\
       & \ge \sum_{j=0}^{M-1} c_j (\ve_y - \ve_{y_i})^\top  f_j(\vx) -\epsilon \|\bar{\mW}(\ve_y - \ve_{y_i})\|_2 
    \end{align*} where the last inequality holds by the Cauchy-Schwartz inequality and we denote $\bar{\mW}^\top:= (\sum_{j=0}^{M-1} c_j f_j)=\sum_{j=0}^{M-1} c_j \mW_j^\top$ to represent our ensemble function.  
    Hence, if the following holds, 
    \begin{align*}
    \sum_{j=0}^{M-1} c_j (\ve_y - \ve_{y_i})^\top  f_j(\vx) -\epsilon \|\bar{\mW}(\ve_y - \ve_{y_i})\|_2 \ge 0 \iff \frac{1}{\epsilon} \sum_{j=0}^{M-1} c_j (\ve_y - \ve_{y_i})^\top  f_j(\vx) \ge  \|\bar{\mW}(\ve_y - \ve_{y_i})\|_2,
    \end{align*} then $F_M$ is robust. 
    Since, in our assumption 2, $\forall y_i$ and $\mW_j$, $\mW_j(\ve_y-\ve_{y_i})$ are not colinear, from triangle inequality and assumption 1, we have 
    \begin{align*}
        \|\bar{\mW}(\ve_y - \ve_{y_i})\|_2 = \|\sum_{j=0}^{M-1} c_j \mW_j(\ve_y - \ve_{y_i})\|_2 &< \sum_{j=0}^{M-1}c_j \|\mW_j(\ve_y - \ve_{y_i})\|_2 \\
        &\le \frac{1}{\epsilon}\sum_{j=0}^{M-1}c_j (\ve_y - \ve_{y_i})^\top  f_j(\vx)
    \end{align*}
    As the inequality holds strictly, we can always find an $\epsilon' > \epsilon$ such that the inequality still holds. Hence, $F_M$ is $\epsilon'$-robust. 
\end{proof}
\subsection{Proof of Corollary \ref{cor:1}}
We restate the corollary below for convenience. 

\addtocounter{corollary}{-2}
\begin{corollary}[Sufficient conditions for $\epsilon$-robustness]
For a data point $(\vx,y)\in (\mX, \mY)$, if a classifier model $F=H(f)$ with prediction function, $f(\vx)=\mW^\top \vx$ satisfies $\frac{1}{\epsilon} (\ve_y - \ve_{y_i})^\top  f(\vx) \ge  \|\mW(\ve_y - \ve_{y_i})\|_2$, then $F$ is $\epsilon$-robust at $\vx$.    
\end{corollary}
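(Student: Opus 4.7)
The plan is to treat this as the single-model specialization of the argument already carried out in the proof of Theorem~\ref{thm:1}: the hypothesis is exactly what drops out of that proof when $M=1$, so I do not need assumption~2 (non-colinearity) here, only the bound~(1). Concretely, I would invoke Lemma~\ref{lemma:robustcond} and reduce the claim to showing that, for every $y_i \ne y$,
\begin{equation*}
\min_{\tilde{\vx}\in B(\vx,\epsilon)} (\ve_y - \ve_{y_i})^\top f(\tilde{\vx}) \ge 0.
\end{equation*}

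First I would decompose the linear prediction as $f(\tilde{\vx}) = \mW^\top \vx + \mW^\top (\tilde{\vx}-\vx)$, so that
\begin{equation*}
(\ve_y - \ve_{y_i})^\top f(\tilde{\vx}) = (\ve_y - \ve_{y_i})^\top f(\vx) + \bigl(\mW(\ve_y - \ve_{y_i})\bigr)^\top (\tilde{\vx}-\vx).
\end{equation*}
The first term is independent of $\tilde{\vx}$, so the minimum over $B(\vx,\epsilon)$ is attained by minimizing the inner product on the right, which by the Cauchy--Schwarz inequality is bounded below by $-\epsilon\,\|\mW(\ve_y-\ve_{y_i})\|_2$ (achieved at $\tilde{\vx}-\vx = -\epsilon\,\mW(\ve_y-\ve_{y_i})/\|\mW(\ve_y-\ve_{y_i})\|_2$, assuming the latter is nonzero; the degenerate case is trivial since the min is then just $(\ve_y - \ve_{y_i})^\top f(\vx) \ge 0$).

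Combining these gives
\begin{equation*}
\min_{\tilde{\vx}\in B(\vx,\epsilon)} (\ve_y - \ve_{y_i})^\top f(\tilde{\vx}) \;\ge\; (\ve_y - \ve_{y_i})^\top f(\vx) - \epsilon\,\|\mW(\ve_y - \ve_{y_i})\|_2,
\end{equation*}
and the hypothesis $\tfrac{1}{\epsilon}(\ve_y - \ve_{y_i})^\top f(\vx) \ge \|\mW(\ve_y - \ve_{y_i})\|_2$ is precisely the condition that makes the right-hand side nonnegative. By Lemma~\ref{lemma:robustcond}, $F$ is then $\epsilon$-robust at $\vx$. There is no real obstacle here; the only subtle point is the degenerate case where $\mW(\ve_y-\ve_{y_i})=\vzero$, which must be handled separately (but is immediate), and the fact that, unlike Theorem~\ref{thm:1}, the inequality here need not be strict, so we only get $\epsilon$-robustness rather than $\epsilon'$-robustness for some $\epsilon' > \epsilon$.
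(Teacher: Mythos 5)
Your proposal is correct and is essentially what the paper intends: its own proof of this corollary just says the result follows from the proof of Theorem~\ref{thm:1} with $M=1$, and your argument is exactly that specialization (Lemma~\ref{lemma:robustcond}, the decomposition $f(\tilde{\vx}) = \mW^\top\vx + \mW^\top(\tilde{\vx}-\vx)$, and Cauchy--Schwarz). Your added remarks on the degenerate case $\mW(\ve_y-\ve_{y_i})=\vzero$ and on why only non-strict $\epsilon$-robustness is obtained are accurate refinements, not deviations.
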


\begin{proof}
    This result follows directly from the proof of Theorem \ref{thm:1}, with $M=1$. 
\end{proof}

\subsection{Proof of Corollary \ref{cor:2}}
We restate the corollary below for convenience. 

\begin{corollary}[Linear MoLEx is more robust than sequential model]
If the base models of MoLEx $f_j = u_{i_{t}}\circ u_{i_{t-1}} \circ... \circ u_{i_0}$ satisfies assumptions 1 and 2 in Theorem \ref{thm:1} above, then $\vz_{t+1}=\sum_{j=0}^{n_t} c_j f_j$ is more robust than $f_{[0:t]}$.  
\end{corollary}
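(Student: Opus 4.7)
\textbf{Proof proposal for Corollary \ref{cor:2}.} The plan is to derive the result as a direct consequence of Theorem \ref{thm:1} and Corollary \ref{cor:1}, exploiting the fact that the sequential base model $f_{[0:t]} = u_t \circ u_{t-1} \circ \cdots \circ u_0$ already appears as one of the summands $f_j$ in the linear ensemble expansion of $\vz_{t+1}$ given in Eqn.~\ref{eq:ensem}. Concretely, among the $3^{t+1}-1$ compositions enumerated by the unrolling, the particular index sequence $(i_0, i_1, \dots, i_t) = (0, 1, \dots, t)$ — i.e., always selecting the ``diagonal'' layer $u_s$ at depth $s$ through the $\alpha u_s$ branch — yields exactly $f_{[0:t]}$, with a strictly positive coefficient. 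So we may assume without loss of generality that $f_{[0:t]} = f_{j^\star}$ for some index $j^\star \in \{0, 1, \dots, n_t\}$.

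First, I would invoke Corollary \ref{cor:1} to conclude that $f_{[0:t]}$ is $\epsilon$-robust at $\vx$: this is immediate once we know $f_{[0:t]}$ satisfies assumption 1 of Theorem \ref{thm:1}, which holds by hypothesis (since $f_{[0:t]}$ is among the $f_j$ assumed to satisfy assumptions 1 and 2). Let $F_{[0:t]} = H(f_{[0:t]})$ denote the associated classifier.

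Next, I would apply Theorem \ref{thm:1} to the full ensemble $F_M = H\bigl(\sum_{j=0}^{n_t} c_j f_j\bigr) = H(\vz_{t+1})$ with the $M = n_t + 1$ base models $f_0, f_1, \dots, f_{n_t}$ and their nonnegative coefficients $c_j$. Since every $f_j$ satisfies assumptions 1 and 2 of Theorem \ref{thm:1}, the conclusion of that theorem gives an $\epsilon' > \epsilon$ such that $F_M$ is $\epsilon'$-robust at $\vx$. Combining this with the $\epsilon$-robustness of $F_{[0:t]}$ from the previous paragraph, we obtain $\epsilon' > \epsilon$, which is exactly the definition that $F_M$ (i.e., MoLEx) is more robust than $F_{[0:t]}$ at $\vx$.

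The only subtlety — and what I expect to be the main thing to check carefully — is that the identification of $f_{[0:t]}$ as one of the $f_j$ inside the ensemble is legitimate, meaning its coefficient in the unrolling is strictly positive. This follows from tracking the $\alpha$-branch through the unrolling in Eqn.~\ref{eq:ensem}: picking the $\alpha u_t(\vz_t)$ term at each depth contributes a factor of $\alpha$ (and never $1-\alpha$) to this particular composition, giving coefficient $\alpha^{t+1} > 0$ provided $\alpha \in (0,1)$. Once this is in hand, the rest is a two-line application of Theorem \ref{thm:1} and the definition of ``more robust than.''
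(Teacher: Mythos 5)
Your proposal is correct and follows essentially the same route as the paper's proof: identify $f_{[0:t]}$ as one of the ensemble's base models (the paper justifies this by noting one expert is always fixed to be the original layer $u_t$, which is exactly your $\alpha$-branch observation), apply Corollary \ref{cor:1} to get its $\epsilon$-robustness, and then invoke Theorem \ref{thm:1} for the strict improvement $\epsilon' > \epsilon$. Your extra care in verifying the coefficient $\alpha^{t+1} > 0$ is a detail the paper leaves implicit but does not change the argument.
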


\begin{proof}
    In each layer $t$ of MoLEx, as one layer expert is always fixed to be the original pre-trained layer $u_t$, the sequential model, $f_{[0:t]}$ will always be one of the base models. Then, by Corollary \ref{cor:1} and assumption 1, $f_{[0:t]}$ is $\epsilon$-robust. The rest of the corollary follows as a consequence of Theorem \ref{thm:1} as $\vz_{t+1}$ will be $\epsilon'$-robust with $\epsilon'>\epsilon$. 
\end{proof}

\section{Additional Experimental Details}
\label{appx:add}

\subsection{Natural Language Understanding: GLUE}
\label{appx:glue}
\textbf{Tasks:} \underline{CoLA} \citep{warstadt-etal-2019-neural} consists of sequences of words taken from books and journal articles on linguistic theory with labels to determine if they are grammatically acceptable or not. \underline{SST-2} \citep{socher-etal-2013-recursive} comprises of movie reviews and the task is to predict their sentiments as positive or negative. \underline{MRPC} \citep{dolan-brockett-2005-automatically} is a corpus of pairs of sentences pulled from online news sources and annotated by humans whether they are semantically equivalent. The \underline{QQP\footnote{\url{https://quoradata.quora.com/First-Quora-Dataset-Release-Question-Pairs}}} dataset was collated from the community question-answering website Quora. It contains question pairs and simlar to MRPC, the goal is to determine if they are labelled to be semantically equivalent. \underline{STS-B} \citep{cer-etal-2017-semeval} is another sentence pair similarity task extracted from news headlines, video and image captions, and natural language inference data. However, it differs from the previous tasks in not using binary labels and instead each examples is accompanied by a similarity score from 1 to 5. \underline{MNLI} \citep{williams-etal-2018-broad} uses pairs of premise and hypothesis sentences that have been collected from ten different sources, including transcribed speech, fiction, and government reports. The objective is to predict whether the premise entails the hypothesis (entailment),
contradicts the hypothesis (contradiction), or neither (neutral). \underline{QNLI} \citep{rajpurkar-etal-2018-know} is a task to determine if the context sentence in a question-sentence pair contains the answer to the question. The sentences were taken from paragraphs in Wikipedia and the questions were annotated by humans. Lastly, we have \underline{RTE} \citep{10.1007/11736790_9,article,giampiccolo-etal-2007-third,DBLP:conf/tac/BentivogliMDDG09}, a compilation of datasets from a series of annual textual entailment challenges. Similar to MNLI, the objective is to determine if the sentence pairs contain an entailment or not. The classes for contradiction and neutral as in MNLI are collapsed into a single non-entailment class. 

\textbf{Metrics:} All tasks in GLUE are classification tasks, except for STS-B which is a regression task. Therefore, the metric reported for STS-B is the Pearson correlation coefficient as is standard practise. We report the overall accuracy for MNLI which includes both matched and mismatched data. These correspond to evaluations on pairs of sentences within the same domain or cross-domain respectively. On CoLA, we use the Matthews correlation coefficient \citep{MATTHEWS1975442} for evaluation due to the unbalanced binary classification data. This metric ranges from -1 to 1, with 0 indicating random guessing. For all other tasks, we present their accuracy for evaluation. Across all metrics, a higher number reflects stronger performance. 

\textbf{Model:} We use the pre-trained RoBERTa-base and RoBERTa-large model \citep{liu2019roberta} from the HuggingFace Transformers library \citep{wolf-etal-2020-transformers} for evaluation on the GLUE task. RoBERTa is an optimized version of the original pre-training recipe proposed in BERT \citep{devlin2018bert}. RoBERTa-base has 125M parameters with 12 layers, 12 attention heads and 768 hidden dimensions while RoBERTa-large has 355M parameters with 24 layers, 16 attention heads and 1024 hidden dimensions. 

\textbf{Implementation details:} We follow the same fine-tuning set up as in the original LoRA \citep{hu2021lora} paper for all GLUE experiments using the their publicly available code \url{https://github.com/microsoft/LoRA}. We use the same setting for fine-tuning on the pre-trained model and from an MNLI checkpoint. For each task, we also optimize the hyperparameters of the gate used in deciding the layer experts to be used for mixing. These settings can be found in Table \ref{tab:5} and for all gates, we use the same optimizer, AdamW \citep{loshchilov2017decoupled}, as the LoRA parameters with a learning rate of 0.1 and weight decay of 0.01. We report the
mean and standard deviation over 5 random seeds for all results and the result for each run is taken from the best epoch. 

While we employ batch routing in the mixture of layers, each token will have a different choice of layer to be routed to as every token is processed by the gate. In deciding the overall batch's decision, we use 2 different aggregates. The first is a majority-takes-all scheme where we route the batch to the layer which majority of tokens have chosen. The second is to use the maximum over the mean probability vector of all the tokens choices. These are referred to as Mode and Mean respectively under Batch Agg in the table. For gate types with suffix "Sig" we use a sigmoid activation before taking TopK values and the default is a softmax activation. For almost all gates, if they do not have an "Indv Gate", this means that we use the same gate for all layers to decide the mixing layers. On RTE and STS-B, we use individual gates, which means that each layer has its own linear gating function and mixing weights instead of sharing one between all the layers. For all tasks, if the mixing weights are fixed, we use $\alpha=0.95$ as defined in Eqn. \ref{eqn:molex}. 

\begin{table}[t!]
\caption{Hyperparameter settings for LoRA and MoLEx on each GLUE task when fine-tuning RoBERTa-base and RoBERTa-large. }
\label{tab:5}
\small{
\begin{center}
\begin{tabular}{lccccccccc}
\toprule
Method & Dataset & MNLI & SST-2 & MRPC & CoLA & QNLI & QQP & RTE & STS-B \\ 
\midrule
       & Optimizer & \multicolumn{8}{c}{AdamW} \\
       & Warmup Ratio & \multicolumn{8}{c}{0.06} \\
       & LR Schedule & \multicolumn{8}{c}{Linear} \\
\midrule
      \multirow{6}{*}{\shortstack{RoBERTa-base \\LoRA}}  & Batch Size & 16 & 16 & 16 & 32 & 32 & 16 & 32 & 16 \\
       & \# Epochs & 30 & 60 & 30 & 80 & 25 & 25 & 80 & 40 \\
       & Learning Rate & 5E-04 & 5E-04 & 4E-04 & 4E-04 & 4E-04 & 5E-04 & 5E-04 & 4E-04 \\
       & LoRA Config. & \multicolumn{8}{c}{$r_q = r_v = 8$} \\
       & LoRA $\alpha$ & \multicolumn{8}{c}{8} \\
       & Max Seq. Len. & \multicolumn{8}{c}{512} \\
\midrule
    \multirow{6}{*}{\shortstack{RoBERTa-base \\MoLEx gate}}  & Gate Type & Cos-Sig & Cos & Linear & Linear & Cos & Cos-Sig&Linear & Linear \\
    & Projection Dim & 416 & 128 & - & - & 96 & 384 & - & - \\
    & Indv Gate & $\times$ & $\times$ & $\times$ & $\times$ & $\times$ & $\times$ & $\checkmark$ & $\checkmark$\\
    & Batch Agg & Mode & Mode & Mode & Mode & Mean & Mode & Mean & Mean\\
    & Mixing Weights & Learn &Learn &Learn &Fix &Learn &Learn &Fix& Fix \\
    & Load Balance & 0.005 & 0.01 & 0.0 & 0.01 & 0.001 & 0.001 & 0.001 & 0.006 \\
\midrule 
\multirow{6}{*}{\shortstack{RoBERTa-large \\LoRA}} & Batch Size & 4 & 4 & 4 & 4& 4& 4 & 8 & 8 \\
 & \# Epochs & 10 & 10 & 20 & 20 &10& 20& 20 & 30 \\
 & Learning Rate & 3E-04 & 4E-04 & 3E-04 & 2E-04& 2E-04& 3E-04 & 4E-04 & 2E-04 \\
 & LoRA Config. & \multicolumn{8}{c}{$r_q = r_v = 8$}  \\
  & LoRA $\alpha$ & \multicolumn{8}{c}{16}  \\
 & Max Seq. Len. & 128 & 128 & 512 & 128 & 512 & 512& 512& 512 \\
 \midrule
    \multirow{6}{*}{\shortstack{RoBERTa-large \\MoLEx gate}}  & Gate Type &Cos & Cos &Linear & Linear &Cos & Cos-Sig  & Linear &Linear\\
    & Projection Dim & 416 & 64 &- & -&256 &384 & - & -\\
    & Indv Gate &$\times$ & $\times$ & $\checkmark$& $\times$ & $\times$&$\times$& $\checkmark$ &$\checkmark$  \\
    & Batch Agg & Mode & Mode & Mode& Mode &Mean & Mode&  Mode & Mode \\
    & Mixing Weights & Fix & Fix &Fix & Fix &Fix & Learn& Fix & Fix \\
    & Load Balance &0.0001 & 0.0 &  0.0001& 0.01 & 0.001& 0.001& 0.0 &0.0 \\
\bottomrule
\end{tabular}
\end{center}}
\end{table}

\subsection{Natural Language Generation: E2E}
\label{appx:e2e}
\textbf{Dataset: } The E2E NLG dataset approximately consists of more than 50,000 examples from the restaurant domain and there is a 76.5-8.5-15 split of the dataset into a training, validation and test set respectively. The E2E dataset is commonly used for the evaluation of data-to-text tasks and brings new challenges such as open vocabulary, complex syntactic structures and diverse discourse phenomena. Every data input consists of a meaning representation (MR) that includes a sequence of attribute-value pairs and a corresponding target, a natural language (NL) reference text. 

\textbf{Metrics: } We report the same metrics as in \citep{novikova2017e2e}, namely BLEU \citep{papineni-etal-2002-bleu}, NIST \citep{10.5555/1289189.1289273}, METEOR \citep{lavie-agarwal-2007-meteor}, ROUGE-L \citep{lin-2004-rouge} and CIDEr \citep{7299087}. BLEU is a method to evaluate the quality of automated machine translations that scales the geometric mean of the precision scores of the n-grams in a generated text by an exponential brevity penalty factor. Similarly, NIST is based on BLEU with some slight changes. NIST uses weighted precision scores of the n-grams determined by how informative each of them are, instead of an equal weighting as in BLEU, and loosens the brevity penalty for small variations. METEOR evaluates the quality of the generated text at a segment level. It constructs a word alignment between strings and scores them using a parameterized harmonic mean of their unigram precision and recall. ROGUE-L is a metric that naturally captures sentence level structures by only awarding scores to in-sequence co-occurrences in the predicted and reference text. Lastly, CIDEr is a measure for how well the generated text matches the consensus of a set of reference image descriptors. It scores the frequency of n-grams in the generated text that occurs in the reference sentences and discounts n-grams that appear commonly across all images in the dataset. 

\textbf{Model:} We use the pre-trained GPT-2 medium \citep{radford2019language} from the HuggingFace Transformers library \citep{wolf-etal-2020-transformers} for evaluation on the E2E dataset. GPT-2 medium contains 355M parameters with 24 layers, 16 attention heads and 1,024 hidden dimensions. 

\textbf{Implementation details:} We follow the same fine-tuning setup as in \citet{li2021prefix} and LoRA \citep{hu2021lora} using their publicly available code \url{https://github.com/microsoft/LoRA}. We also optimize the hyperparameters of the gate used in deciding the layer experts to be used for mixing. These settings can be found in Table \ref{tab:7} and we use the same optimizer, AdamW \citep{loshchilov2017decoupled}, as the LoRA parameters with a learning rate of 0.1 and weight decay of 0.01. We report the
mean and standard deviation over 5 random seeds for all results and the result for each run is taken from the best epoch. 

While we employ batch routing in MoLEx, each token will have a different choice of layer to be routed to as every token is processed by the gate. In deciding the overall batch's decision for GPT-2, we use a majority-takes-all scheme where we route the batch to the layer which majority of tokens have chosen (Mode). We use a linear gating function with a softmax activation and only implement MoLEx in the first 12 layers of the model. The mixing weights are fixed and we use a value of $\alpha=0.95$ as defined in Eqn. \ref{eqn:molex}. All layers share the same gate for routing. 

\begin{table}[t!]
\caption{Hyperparameter settings for LoRA and MoLEx on the E2E NLG task when fine-tuning GPT-2 medium (M). }
\label{tab:7}
\small{
\begin{center}
\begin{tabular}{lcc}
\toprule
\multirow{17}{*}{\shortstack{GPT-2 M \\LoRA}} & Dataset & E2E \\ 
\midrule
\midrule
\multicolumn{3}{c}{\textit{Training}} \\ 
\midrule
\midrule
 & Optimizer & AdamW \\ 
&Weight Decay & 0.01  \\ 
&Dropout Prob & 0.1  \\ 
&Batch Size & 8 \\ 
&\# Epoch & 5\\ 
&Warmup Steps & 500 \\ 
&Learning Rate Schedule & Linear \\ 
&Label Smooth & 0.1\\ 
&Learning Rate & 0.0002 \\ 
&Adaptation &$r_q = r_v = 4$ \\ 
&LoRA $\alpha$ & 32 \\ 
\midrule 
\multirow{6}{*}{\shortstack{GPT-2 M \\MoLEx gate}} & Gate Type & Linear \\
& Layers with MoLEx & 0 to 11 (inclusive) \\
& Indv Gate & $\times$ \\
& Batch Agg & Mode \\
& Mixing Weights & Fixed \\
& Load Balance & 0.01 \\
\midrule
\midrule 
\multicolumn{3}{c}{\textit{Inference}} \\ 
\midrule
\midrule
&Beam Size & 10 \\ 
&Length Penalty & 0.9  \\ 
&No Repeat Ngram Size & 4 \\ 
\bottomrule
\end{tabular}
\end{center}}
\end{table}

\section{Additional Empirical Analysis Details}
\label{appx:emp}

\subsection{Probing Tasks}
Probing (or diagonostic) tasks \citep{adi2016fine, 10.5555/3241691.3241713, conneau-etal-2018-cram} aid us in the discovery of linguistic features potentially encoded in a deep learning model. Specifically, in the hidden representations of the input in each layer. In order to understand these representations using a probe, an auxiliary classification task is set up where the representations are used as features to predict certain linguistic properties of interest. The better the performance of the classifier, the more likely that the layer's hidden embedding encodes for that particular property. Using the 10 probing tasks developed by \citep{conneau-etal-2018-cram} and inspired by \citep{jawahar2019does}, who had done a similar analysis on BERT, we evaluate each layer of RoBERTa and present the results in Table \ref{table:prob}.  

In each tasks's dataset, there are 100K training sentences and 10K-sentence validation and test sets. All sets are equally balance among the target classes. These datasets were constructed by \citep{conneau-etal-2018-cram} from the Toronto Book Corpus \citep{zhu2015aligning,paperno2016lambada}. 

\textbf{Surface Information:} \underline{SentLen} is a task to predict the length of a sentence, which is considered to be the number of words in the sentence. It is converted into a 6-way classification task by grouping sentence lengths into 6 equal-width bins. \underline{WC} is a classification task with 1000 classes. Each class is a word and each input is a sentence that contains one and only one of the words within those classes. The task is to predict which word is contained within the input sentence. 

\textbf{Syntactic Information:} \underline{BShift} is a binary classification task where half the dataset has sentences intact and another half has sentences with 2 random adjacent words inverted. The goal is to predict if the sentence has a legal word order or if it has been inverted. \underline{TreeD} assesses whether the hierarchical structure of sentences can be inferred from the hidden layer's embedding. The task is to determine the depth of the longest path from root to any leaf in the sentence, with possible depths ranging from 5 to 12. Hence, resulting in a 8-way classification task. \underline{TopConst} is a 20-class task where 19 classes represent the most frequent top constituent sequence and the last class is for all the others. The classifier has to identify which sequence of top constituents immediately follow the input sentence node, which is illustrative of the latent syntactic structures captured by each layer's representation. 

\textbf{Semantic Information:} The goal of the \underline{Tense} task is to identify the tense of the main-clause verb in the input sentence. For the \underline{SubjNum} and  \underline{ObjNum} tasks, both focus on the number of the subject and respectively, direct object, of the main clause. In the \underline{SOMO} dataset, sentences are modified through the replacement of a random noun or verb with another in a challenging way. The bigrams containing these noun or verb replacements will have a comparable corpus frequency with the original, making the task all the more difficult. The last task, the \underline{CoordInv} dataset comprises of sentences with pairs of coordinate clauses, of which some orders have been inverted. The classifier is meant to identify if the sentences are intact or inverted as a binary classification task. 

\subsection{Implementation Details}
We use the SentEval toolkit \citep{conneau-kiela-2018-senteval}, available publicly at \url{https://github.com/facebookresearch/SentEval}, and the same set up as \citep{jawahar2019does} for our probe analysis. We send each of the datasets in the 10 probing tasks through the pre-trained RoBERTA-base model that we use for fine-tuning and extract the feature representations from each layer. Next, we train classifiers, that are simple MLPs with a sigmoid activation, on these features as input. We use the recommended hyperparameter search space of $\{50, 100, 200\}$ hidden units and $\{0.0, 0.1, 0.2\}$ dropout for each task and an additional logistic regression model for the word content (WC) task as it contains 1,000 classes. We report the best classifier's results in Table \ref{table:prob}. 

\subsection{Full results of Section \ref{sec:emp}}
We present the full results of the different layer experts being mixed at inference time for all GLUE tasks when fine-tuning RoBERTa-base with MoLEx in Figure \ref{fig:allheatmap}. Interestingly, for QQP and MNLI, there is a heavy emphasis on the middle layers. As the middle layers encode for syntactic information, MNLI as an inference task does require that structural information to understand the logical implications of the input. However, as QQP is a semantic similarity classification task, it is not obvious why it would require more syntactic information. Indeed, if we look at MRPC, a similar task on sentences instead of questions, it mainly chooses the earlier layers for surface level information which does make sense for sentence similarity. The main distinction between the 2 tasks is that the inputs are either questions or sentences. A plausible explanation is that questions require more syntactic information to be understood, resulting in our findings. 

\begin{figure}[t]
\begin{center}
\includegraphics[width=1\linewidth]{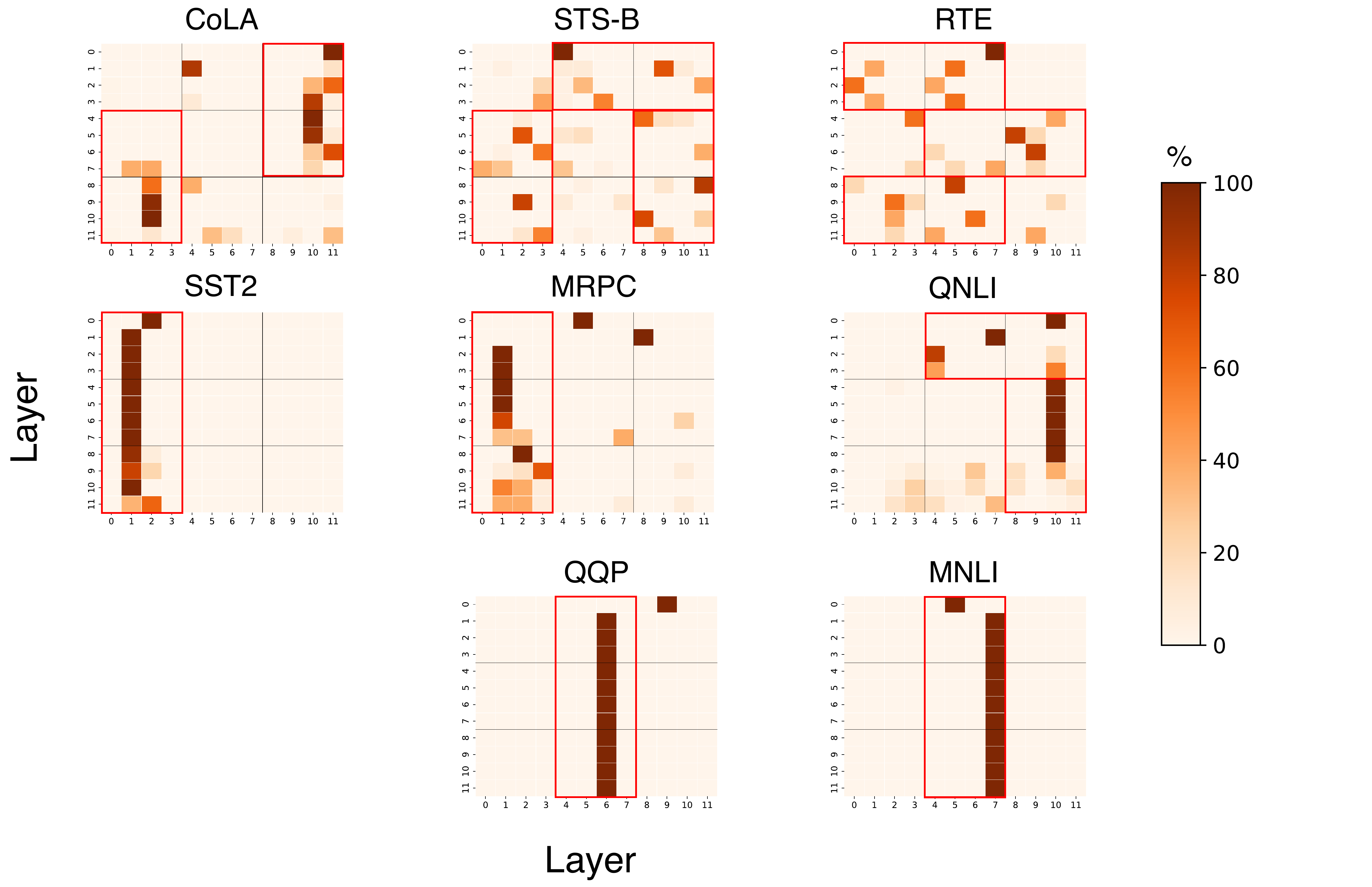}
\end{center}
\caption{\label{fig:allheatmap} Plots of heat maps to visualize the percentage of time each layer expert is chosen at every layer of MoLEx when fine-tuning RoBERTa-base on all GLUE tasks. As one expert is fixed to be the original layer, the x-axis corresponds to the sequential layer while the y-axis corresponds to the layer experts. The darker a square is, the more often that layer is chosen by the gate during inference. For example, when fine-tuning on CoLA, layer 9 mixes with layer 2, 100\% of the time. The grids are partitioned into thirds along the x-axis and y-axis for easy visualization of early, middle and later layers. }
\end{figure}

\subsection{Linguistic Properties Captured by RoBERTa}
\label{appx:prop}
In this section we will discuss the linguistic properties captured by RoBERTa as revealed through our probe analysis. 
We observe in Table \ref{table:prob} that across almost all probes, layer 11 does particularly well, suggesting that the last layer of the model encodes a considerable amount of general linguistic information. This is the main contrast to the probe analysis performed on BERT in \citep{jawahar2019does} and could be an unintended consequence of the optimized training recipe in RoBERTa, highlighting how various training protocols can influence the learning outcomes of a model. It is also worth noting that all probes on RoBERTa, except for WC, performs roughly on the same scale as BERT while WC is much poorer in comparison, even with logistic regression. This could suggest that this surface level information is not relevant to the NLP in the model. 

The remainder of the analysis corroborates with the probe analysis on BERT whereby the early layers contain superficial information, the middle layers, syntactic information and later layers, semantic. This further aligns with the intuition that more complex structures within the data are revealed deeper in the model as it undergoes more processing as discussed in Section \ref{sec:upcycle}. 

\section{Ablation Study}
\label{app:ablation}
Table \ref{tab:top2} compares results on 3 GLUE tasks, CoLA, QQP, and SST-2 when using Top1 and Top2 routing. We observe that Top1 yields more improvement. Thus, we use a Top1 routing for our MoLEx models.  

\begin{table}[t!]
\caption{\small Comparison of RoBERTa-base on GLUE tasks, CoLA, QQP and SST-2 when fine-tuned with MoLEx using Top-1 and Top-2 routing. We report accuracy for all tasks in the table below. }
\vspace{-0.15in}
\label{tab:top2}
\begin{center}
\small{\begin{tabular}{l|ccc}
\toprule
Method &  CoLA & QQP & SST-2 \\ 
 \midrule
MoLEx (Top1) & \textbf{64.8} \textsubscript{$\pm$.5} & \textbf{91.0} \textsubscript{$\pm$.0} & \textbf{95.4} \textsubscript{$\pm$.2}  \\
MoLEx  (Top2) &63.7 \textsubscript{$\pm$.4} & 90.7 \textsubscript{$\pm$.0} & 95.0 \textsubscript{$\pm$.3} \\
\bottomrule
\end{tabular}}
\end{center}
\end{table}

\color{black}
\section{Additional Experimental Results and Efficency Analysis}
\subsection{Full parameter fine-tuning for RoBERTa}
We conduct additional experiments for RoBERTa-base using full parameter fine-tuning on the GLUE benchmark. We present the results in Table \ref{tab:full-roberta} below. Our MoLEx model consistently outperforms the full parameter fine-tuning across all tasks. These findings further confirm MoLEx's adaptability to different models and training methods. 

\begin{table}[t!]
\caption{{\color{black}\small RoBERTa-base with full parameter fine-tuning and with MoLEx when fine-tuned on the GLUE benchmark.  We report accuracy for all tasks except for, Pearson correlation for STS-B, Matthew's correlation for CoLA and the overall (matched and mismatched) accuracy for MNLI. A higher value reflects a better performance of the model.  }}
\vspace{-0.15in}
\label{tab:full-roberta}
\begin{center}
\small{{\color{black}\begin{tabular}{l|ccccccccc}
\toprule
Method &  RTE & MRPC & STS-B & CoLA & MNLI & QNLI & SST-2 & QQP     & Ave. \\ 
 \midrule
RoBERTa (full parameter)  & 80.1    & 88.7 & 90.9 & 62.6 & 87.7&92.9 & 95.0&91.8 &86.2 \\
RoBERTa (MoLEx)  & \textbf{80.9} & \textbf{89.5} &  \textbf{91.1} &  \textbf{63.3}& \textbf{87.8} & \textbf{93.1} &  \textbf{95.1} &  91.8 & \textbf{86.6} \\
\bottomrule
\end{tabular}}}
\end{center}
\end{table}

\subsection{Fine-tuning Llama-3.2-1B using LoRA}
We conduct additional experiments to fine-tune Llama-3.2-1B on the Alpaca dataset using LoRA. We use the publicly available repository \url{https://github.com/meta-llama/llama3} for our experiments and employ MoLEx to fine-tune the model in comparison with LoRA. As shown in Table \ref{tab:llama3}, on this task with Llama-3.2-1B, MoLEx achieves better train and validation PPL than LoRA, demonstrating the effectiveness of MoLEx in large language models. 

Further, we evaluate each model on the standard MMLU \citep{Hendrycks2020MeasuringMM}, AGIEval English \citep{zhong-etal-2024-agieval}, Hellaswag \citep{Zellers2019HellaSwagCA}, and ARC-Challenge dataset \cite{Clark2018ThinkYH} and report their results in Table \ref{tab:llama3-mmlu}. Consistent with our results on Alpaca, MoLEx improves over the naive LoRA model, confirming its advantage. 

\begin{table}[t!]
\caption{{\color{black}\small Train and validation perplexity (PPL) when fine-tuning Llama-3.2-1B on Alpaca using LoRA and LoRA + MoLEx. Lower PPL is indicative of better performance. }}
\vspace{-0.15in}
\label{tab:llama3}
\begin{center}
\small{{\color{black}\begin{tabular}{l|cc}
\toprule
Method &  Train PPL ($\downarrow$) & Validation PPL ($\downarrow$)   \\ 
 \midrule
LoRA & 4.18   & 4.11   \\
MoLEx  & \textbf{4.05}  & \textbf{4.02} \\
\bottomrule
\end{tabular}}}
\end{center}
\end{table}

\begin{table}[t!]
\caption{{\color{black}\small Accuracy when evaluating Llama-3.2-1B on MMLU, AGIEval English, Hellaswag, and ARC-Challenge using LoRA and LoRA + MoLEx. A higher value is indicative of better performance. }}
\vspace{-0.15in}
\label{tab:llama3-mmlu}
\begin{center}
\small{{\color{black}\begin{tabular}{l|cccc}
\toprule
Method & MMLU & AGIEval English & Hellaswag & ARC-Challenge   \\ 
 \midrule
LoRA & 30.42  & 19.16 & 47.14 & 36.69 \\
MoLEx  & \textbf{31.51}& \textbf{19.81} & \textbf{48.23} & \textbf{37.80}  \\
\bottomrule
\end{tabular}}}
\end{center}
\end{table}

\subsection{Detailed Efficiency Analysis on Llama-3.2-1B}
\label{app:detailed_eff}
While more resources are required during inference in MoLEx as compared to the naive PEFT model, these can be accelerated during inference time through parallization. In this section, we include a more detailed efficiency analysis when implementing MoLEx in Llama-3.2-1B and using LoRA for fine-tuning on the Alpaca dataset. 

As computational efficiency refers to the amount of time required for a given step in a calculation, we maintain that MoLEx is as efficient as the original method used without MoLEx. While MoLEx almost doubles the overall computational load (flops), there is only a minimal increase in inference time due to parallelization of the forward computations through two layers. We present our analysis in the Table \ref{tab:detail_eff} for a more detailed comparison with naive PEFT models. 

\begin{table}[t!]
\caption{{\color{black}\small Efficiency analysis of Llama-3.2-1B fine-tuned using LoRA during inference on the Alpaca dataset with and without MoLEx implemented. }}
\vspace{-0.15in}
\label{tab:detail_eff}
\begin{center}
\footnotesize{{\color{black}\begin{tabular}{l|cccccccc}
\toprule
\multirow{2}{*}{Method} & Total & Trainable & Trainable & Memory & Flop/  & Sec/ & Flop/ & Min/ \\ 
& Parameters & Parameters & Parameters (\%) &(MB)& Sample& Sample& Sec &Epoch
\\
 \midrule
 LoRA &1,236,666,368  & 851,968  &0.0689  &10,442     & 12.329 T & 0.506 & 24.366 T&4:59 \\
MoLEx  & 1,236,699,152 &  884,752     &0.0715     &10,442     &22.511 T & 0.557 & 40.415 T & 6:00\\
\bottomrule
\end{tabular}}}
\end{center}
\end{table}

\end{document}